\newtheorem{definition}{Definition}
\newtheorem{theorem}{Theorem}
\newcommand{\TRUE}{\mathrm{TRUE}}
\newcommand{\FALSE}{\mathrm{FALSE}}  
\title{Polyra Swarms: A Shape-Based Approach to Machine Learning}
\author{%
  Simon Klüttermann\\
  Department of Computer Science\\
  TU Dortmund Univerity\\
  Dortmund, Germany \\
  \texttt{simon.kluettermann@cs.tu-dortmund.de} \\
  \And
  Emmanuel Müller \\
  Department of Computer Science \\
  TU Dortmund Univerity \\
  Dortmund, Germany\\
  \texttt{emmanuel.mueller@cs.tu-dortmund.de} \\
}
\begin{document}

\maketitle

\begin{abstract}
We propose Polyra Swarms, a novel machine-learning approach that approximates shapes instead of functions. Our method enables general-purpose learning with very low bias. In particular, we show that depending on the task, Polyra Swarms can be preferable compared to neural networks, especially for tasks like anomaly detection. We further introduce an automated abstraction mechanism that simplifies the complexity of a Polyra Swarm significantly, enhancing both their generalization and transparency. Since Polyra Swarms operate on fundamentally different principles than neural networks, they open up new research directions with distinct strengths and limitations.
\end{abstract}

\section{Introduction}

\begin{figure}[htbp]
    \centering
    \includegraphics[width=1.0\linewidth]{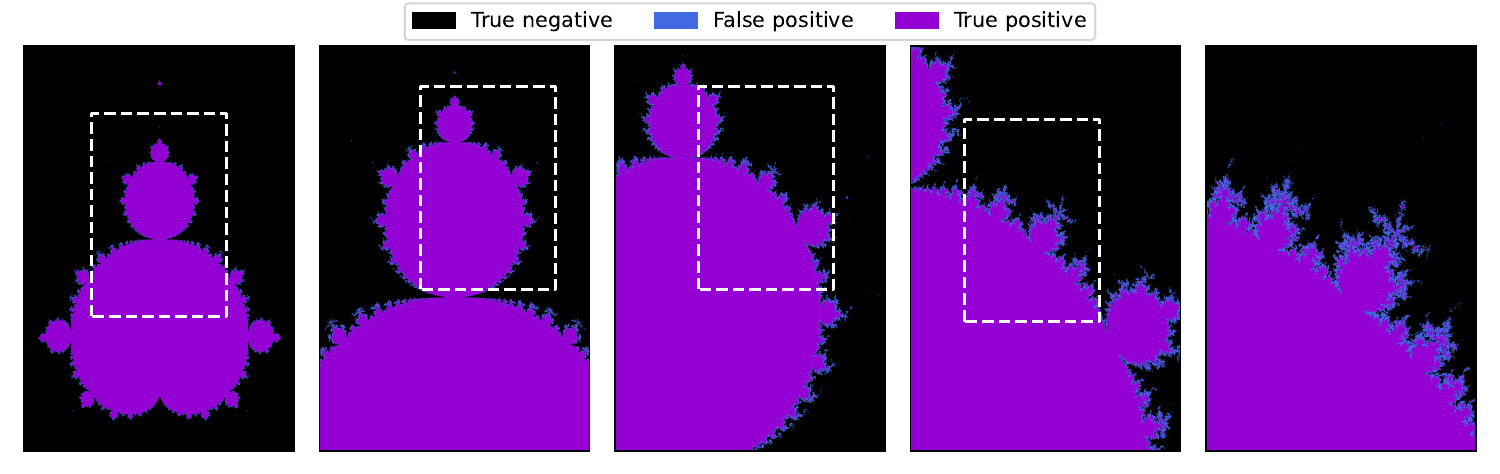}
    \caption{Example Polyra Swarm fit to a dataset inspired by the mandelbrot set (purple). To see the area falsely considered normal (blue), we need to zoom in further from left to right, showing the very low bias of our Polyra Swarm fits. Please note that false negatives are impossible here by construction. Further details about this experiment can be found in Appendix~\ref{app:mandelbrot}.}
    \label{fig:eye}
\end{figure}

Traditional machine learning frames learning as function approximation: inputs are mapped to outputs, and model success is evaluated by predictive accuracy. However, many human concepts, such as "dog" or "chair", are better characterized as regions within a structured feature space rather than as deterministic functions.

In this work, we propose a new learning paradigm: shape approximation. Instead of asking "What is the output corresponding to this input?", we ask "What region does this input belong to?" Our approach models such regions using combinations of semi-random polytopes connected via logical operations, forming what we call a Polyra Swarm.

Polyra Swarms construct regions by combining convex polytopes (via semi-random initialization) with logical operations (e.g., unions, intersections) to model non-convex structures, avoiding gradient-based optimization entirely and thus can not become stuck in local optima.
We demonstrate that they can approximate complex, high-dimensional structures with low bias and high flexibility (see Figure~\ref{fig:eye}). This geometric perspective enables general-purpose learning across tasks like classification, regression and anomaly detection.

We show that shape approximation has conceptual and or empirical benefits compared to traditional function approximation in several settings, particularly where structure, sparsity, or interpretability are important. Furthermore, the logical underpinnings of Polyra Swarms allow us to introduce symbolic abstraction techniques that simplify models into minimal logical rules, inherently regularizing models while improving both their interpretability and generalization.

While Polyra Swarms excel at precise, interpretable region modeling in low-to-moderate dimensions, they complement, rather than compete with—neural networks. By integrating representation learning (e.g., pre-trained embeddings) with shape approximation, we mitigate high-dimensional challenges while retaining geometric fidelity.

Learning through shape approximation offers a promising, logic-grounded alternative to function-based models, with potential implications, especially for shape-like tasks like anomaly detection but also for on-device computing, conceptual generalization, and interpretable machine learning.

Because of page limitations, we only briefly describe each experiment in the main paper and include further information and experiments in the appendix. Additionally, the code to reproduce all our experiments can be found at \href{https://anonymous.4open.science/r/polyra_exp-2BA7/README.md}{anonymous.4open.science/r/polyra\_exp-2BA7}. We also implement polyra swarms as a python library, which can be found at \href{https://anonymous.4open.science/r/polyra_lib-D1C0/README.md}{anonymous.4open.science/r/polyra\_lib-D1C0}. 
Unless explicitly mentioned, every experiment conducted here takes either seconds or minutes on a consumer-grade laptop.

\section{Related Work}\label{sec:rw}

Most machine learning methods to date have framed learning as a problem of function approximation, mapping inputs to outputs via models optimized for predictive accuracy~\cite{FundationPerceptron, FundationML}. This view aligns with the psychological doctrine of \emph{functionalism}, which emphasizes understanding the interaction of mental processes over their internal structure~\cite{FunctionalismIsCool, FunctionalismReview}. Functionalism emerged~\cite{DeweyFunctionalism} in contrast to \emph{structuralism}~\cite{StructuralismIntro}, which sought to analyze consciousness by identifying its constituent components.

Inspired by this philosophical distinction, we propose a structuralist perspective on machine learning. Rather than approximating a function, we aim to model the \emph{shape} of the data distribution itself. Specifically, we seek to identify the geometric regions in input space that are possible under a given distribution. We term this approach \emph{shape approximation}.

Several existing approaches partially reflect a structuralist inclination. One-class classification methods~\cite{surveyOneClass, ruff} learn a decision region occupied by a single class and can be seen as modeling structure directly in feature space. Feature engineering~\cite{FEfeatureengi} similarly involves decomposing behavior into meaningful low-level components. In symbolic learning~\cite{XSymbolic1,XSymbolic2}, prototype-based methods~\cite{SOMXPrototype}, and topological data analysis (TDA)~\cite{XTDA1,XTDA2}, researchers have explored structural descriptions of data. However, these methods typically require manual design or domain-specific tuning and have not achieved the level of automatic, general-purpose learning seen in function-oriented models such as deep neural networks.

Our approach bridges this gap by enabling general-purpose structural modeling: Polyra Swarms construct interpretable, logic-based shapes that can approximate complex data distributions without relying on gradient descent. This offers a new perspective that complements existing function-based paradigms.






\section{Polyra Swarms}
We will now describe Polyra Swarms mathematically and prove that they can approximate arbitrary shapes. Since our notation is extensive, we provide an overview in Appendix~\ref{app:notation}.

\begin{definition}[Shape]
    We consider a shape $Q$ to be a measurable, bounded subset of space $Q\subseteq \mathbb{R}^{dim}$ with a finite length border. A shape can be represented through an indicator function $q$.
    \begin{equation}
        q(x)=\begin{cases}
                \TRUE,\hspace{3em} x\in Q\\
                \FALSE,\hspace{2.6em} x\notin Q
            \end{cases}
            \label{eqn:indicator}
    \end{equation}
    \label{def:shape}
\end{definition}
We use capital letters to denote shapes, and lowercase letters to refer to indicator functions.

\begin{definition}[Shape Approximation]
    Given a set of samples $X_{train}$, which all lie in a shape $Q$ ($x\in Q,\;\forall x\in X_{train}$), the task of shape approximation is to learn an indicator function $f(x)$ so that the approximation error $\eta=\int_{\mathbb{R}^{dim}} \|f(x)-q(x)\| \,dx$ is minimal.
    \label{def:shapeApprox}
\end{definition}

\begin{definition}[Polytope]
    A common way of describing convex shapes is through polytopes. These are defined through $K$ constraint vectors $M_A \in \mathbb{R}^{K \times dim}$ defining directions and connected $b_A \in \mathbb{R}^K$ bounds that represent the maximum value along this direction inside the shape. Thus a polytope $A$ can be described by Equation~\ref{eqn:polytope}.
    \begin{equation}
        A = \bigcap_{j=1}^{K} \left\{ x \in \mathbb{R}^n : (M_A)^j \cdot x \leq (b_A)^j \right\}
        \label{eqn:polytope}
    \end{equation}
\end{definition}

\begin{definition}[Polyra Swarm]
    A \emph{Polyra Swarm}\footnote{Polyra is a portmanteau of "Polytope" and "Piranha". See Appendix~\ref{app:name}.} $P$ is the conjunction of many base shapes, as shown in Equation~\ref{eqn:merge}.
    \begin{equation}
        p(x)=\bigwedge_{i=0}^N f_i(x),\hspace{5em} P=\bigcap_{i=0}^N F_i
        \label{eqn:merge}
    \end{equation}
    A sample $x$ is in the shape approximated by a given Polyra Swarm, when it lies in each of the base shapes $F_i$
    \label{def:polyra}
\end{definition}

\begin{definition}[Polyra base shapes]
    We use base shapes $F_i$ defined by the logical conjunction of two polytopes (Equation~\ref{eqn:submodel}). If a sample lies in the condition polytope $A_i$, it also has to lie in the consequent polytope $B_i$.
    \begin{equation}
        f_i(x)=(x \in A_i \Rightarrow x \in B_i) \Leftrightarrow f_i(x)= (x \notin A_i \vee x\in B_i) \Leftrightarrow F_i=A_i^{\complement} \cup B_i
        \label{eqn:submodel}
    \end{equation}
    Here $A^{\complement}$ represents the complement shape to A ($a^{\complement}(x)=1-a(x)$).
    Effectively, each base shape excludes the shape $A_i \cap B_i^{\complement}$ from the shape approximated by the Polyra Swarm. 
    \label{def:submodel}
\end{definition}

\begin{theorem}
    For every measurable shape $Q$ and every $\epsilon > 0$, there exists a Polyra Swarm $P$ such that $\int_{\mathbb{R}^d} |p(x) - q(x)| dx < \epsilon$.
\end{theorem}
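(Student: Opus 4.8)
The plan is to approximate $Q$ from the outside by a finite union of complements of base shapes. The key observation is that each base shape $F_i = A_i^{\complement} \cup B_i$ \emph{removes} the set $A_i \cap B_i^{\complement}$ from whatever region we have so far; and since we may take $B_i$ to be an arbitrary polytope (in particular, a half-space, taking $K=1$), the ``carved-out'' pieces $A_i \cap B_i^{\complement}$ can be chosen to be fairly general convex sets — e.g. axis-aligned boxes, or intersections of a box with a single half-space. So the proof reduces to a covering argument: show that the complement $Q^{\complement}$, restricted to a large ball $B_R \supseteq Q$, can be covered up to measure $\epsilon$ by finitely many such carvable pieces, and that the region \emph{outside} $B_R$ can itself be removed by finitely many base shapes.

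\emph{Step 1 (Localization).} Since $Q$ is bounded, fix $R$ with $Q \subseteq B_R$ (an open ball of radius $R$). The region $\mathbb{R}^d \setminus B_R$ contributes nothing to $Q$, so we want our Polyra Swarm to evaluate to $\FALSE$ there. Cover the complement of $B_R$ by finitely many half-spaces: more carefully, for a slightly larger cube $C \supseteq B_R$, the set $\mathbb{R}^d \setminus C$ is a finite union of $2d$ half-spaces $H_1,\dots,H_{2d}$, and each $H_k$ can be excised by a base shape with $A_i = \mathbb{R}^d$ (vacuous condition, $K$ large enough to force $A_i$ trivial, or simply $A_i$ a huge box containing everything relevant) and $B_i = H_k^{\complement}$. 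This guarantees no false negatives far away and handles the unbounded part exactly.

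\emph{Step 2 (Grid approximation of $Q^{\complement}\cap C$).} Partition $C$ into a fine grid of small cubes of side $\delta$. Let $U$ be the union of the closed grid cubes that lie entirely outside $Q$ (or: meet $Q^{\complement}$ in... — to keep false negatives impossible as emphasized after Definition~\ref{def:shapeApprox}, take only cubes disjoint from $Q$). Because $Q$ has a finite-length (i.e. finite $(d{-}1)$-dimensional) border, the cubes that straddle $\partial Q$ have total volume $O(\delta \cdot \mathrm{length}(\partial Q)) \to 0$; hence $\mathrm{vol}\big((C\setminus U)\setminus Q\big) \le \mathrm{vol}(B_R\setminus C) + O(\delta\cdot\mathrm{length}(\partial Q))$, which is $<\epsilon$ once $C$ is chosen slightly larger than $B_R$ and $\delta$ is small enough. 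Each grid cube $D$ in $U$ is itself a polytope, so it can be excised by a base shape: set $A_i$ to be a box enclosing $C$ and $B_i$ chosen so that $A_i \cap B_i^{\complement} = D$ — concretely, one removes $D$ by intersecting over its $2d$ facets, i.e. by using $2d$ base shapes per cube, each with condition ``$x$ lies on the cube-side of the other facets'' and consequent the appropriate half-space; this is exactly the standard fact that a box complement inside a larger box is a finite union of boxes, each a $\{A_i \cap B_i^{\complement}\}$.

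\emph{Step 3 (Assemble and estimate).} Let $P$ be the conjunction of all base shapes from Steps 1 and 2; this is a finite Polyra Swarm. By construction $P \cap Q = Q$ exactly (we only ever removed points outside $Q$), so $p(x)=\FALSE \le$ no harm wherever $q(x)=\FALSE$, i.e. $p(x) \le q(x)$ pointwise, giving $|p(x)-q(x)| = q(x) - p(x) = \mathbf{1}[x \in Q \setminus P]$. But $Q\setminus P \subseteq Q \setminus Q = \emptyset$... — more precisely the only points of $\mathbb{R}^d$ where $p$ and $q$ disagree are points with $q(x)=\FALSE$ and $p(x)=\TRUE$, i.e. points of $C$ not in $Q$ and not in any removed cube, whose measure we bounded by $\epsilon$ in Step 2. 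Hence $\int |p-q|\,dx < \epsilon$.

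\emph{Main obstacle.} The delicate point is Step 2: controlling the volume of the boundary-straddling cubes using only the hypothesis ``finite length border.'' I would make this rigorous by invoking that a finite $(d{-}1)$-rectifiable (or finite perimeter / Minkowski-content) boundary satisfies $\mathrm{vol}(\{x : \mathrm{dist}(x,\partial Q) \le \delta\}) = O(\delta)$, which is precisely what bounds the straddling-cube volume by $O(\delta \cdot \mathrm{length}(\partial Q))$. A secondary bookkeeping obstacle is confirming that every convex piece we need to remove really is expressible as $A_i \cap B_i^{\complement}$ for \emph{polytopes} $A_i, B_i$ — but since boxes and half-spaces are polytopes and the polytope class is closed under the intersections we use, this is routine once phrased as ``box-minus-box is a finite union of boxes.''
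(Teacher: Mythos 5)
Your proposal is correct and follows essentially the same route as the paper's proof in Appendix~\ref{app:usaproof}: confine $Q$ to a bounding box using a few base shapes, tessellate that box into small cells each carved out by a base shape, and bound the approximation error by the total volume of cells meeting the finite-length boundary, which is $O(\delta)$. The only cosmetic differences are that the paper tessellates with $dim$-simplices rather than axis-aligned cubes and assigns a best-approximating consequent polytope $B_i$ to boundary-straddling cells instead of leaving them untouched; the key estimate and the way it closes the argument are identical.
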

\begin{proof}
    $(Sketch)$ We can select arbitrarily small dim-simplices using the condition polytopes $A_i$ and remove the volume outside of $Q$ volume using the consequent polytopes $B_i$. $Full\;proof:$ Appendix~\ref{app:usaproof}
\end{proof}



We now propose an algorithm based on largely random initialization to learn such a swarm of conditional polytopes to approximate the shape containing a set of observed points $X_{train}$ in Equation~\ref{eqn:rnd}. Here $\mathcal{N}$ refers to a normal distribution and $\mathcal{U}$ represents a uniform distribution. Additionally, we leave $(K_A)_i$, $(K_B)_i$ and $N$ as hyperparameters to be discussed in Appendices~\ref{app:h:adim},~\ref{app:h:bdim},~\ref{app:hyper:ensemblesize}.

\begin{equation}
    (M_A)_i^{\mu\nu}, (M_B)_i^{\mu\nu} \overset{\text{i.i.d.}}{\sim} \mathcal{N}(0, 1) \hspace{3em}  (b_A)_i^{\mu} \overset{\text{i.i.d.}}{\sim} \mathcal{U}(\min_{x\in X_{train}}(M_A)_i^\mu\cdot x, \max_{x\in X_{train}}(M_A)_i^\mu\cdot x)
    \label{eqn:rnd}
\end{equation}

We reject those base shapes where $A_i$ does not contain any of the training samples (Appendix~\ref{app:h:minpoi}), and select $(b_B)_i$ from the most extreme values allowed in the condition polytope $A_i$.
\begin{equation}
    (b_B)_i^\mu=\max_{(x\in X_{train}) \;\cup\; ((M_A)_i\cdot x\leq (b_A)_i)} (M_B)_i^\mu\cdot x
    \label{eqn:maximaB}
\end{equation}
For symmetry reasons, we also include the second consequent region induced by $(M_A)_i\rightarrow-(M_A)_i$.

This setup can be learned quickly, is easily parallelized, and guarantees that $p(x)=True\;\forall x\in X_{train}$ ($Proof:$ Appendix~\ref{app:alltrueproof}).

\section{Using Polyra Swarms}\label{sec:use}
After showing that it is possible to approximate any shape using a Polyra Swarm, here we want to demonstrate that universal shape approximation also implies general-purpose learning. Doing so generally requires rethinking how to approach any machine-learning problem. In this section, we will discuss some of the most common machine-learning tasks and show how they can be solved using Polyra Swarms.

\subsection{Shape Approximation}\label{sec:match}
\begin{figure}[htbp]
    \centering
    \includegraphics[width=0.6\linewidth]{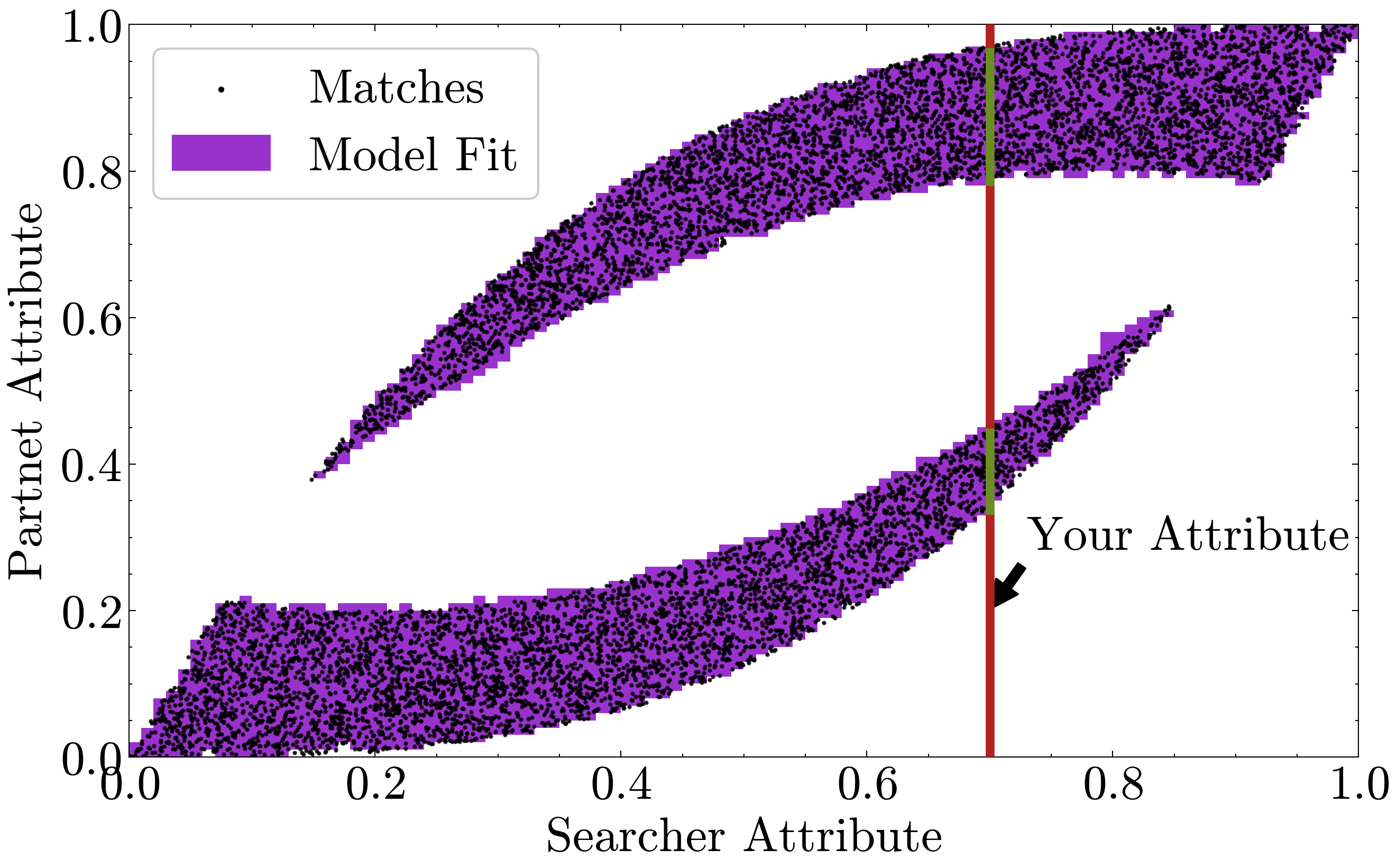}
    \caption{Example of how to solve matching using shape approximation. Given a pattern between your and your partner's attributes, we can describe it as a shape. More details about the experimental setup are found in Appendix~\ref{app:match}.}
    \label{fig:match}
\end{figure}

First, consider that many machine learning problems are more naturally shape approximation tasks. As an example, consider the toy example in Figure~\ref{fig:match} showing how to solve matching problems using shape approximation.
The task of selecting a well-matching partner depending on your and their attributes is a common problem from advertising to job matching. It can be solved through, e.g., siamese neural networks~\cite{siamese}, learning a representation in which matching samples are close to each other. However, instead of learning such a representation, we search for the shape in the space of both your and your match attributes of participants who match successfully.


\subsection{Classification}\label{sec:class}
\begin{figure}[htbp]
    \centering
    \begin{minipage}[b]{0.32\linewidth}
        \centering
        \includegraphics[width=\linewidth]{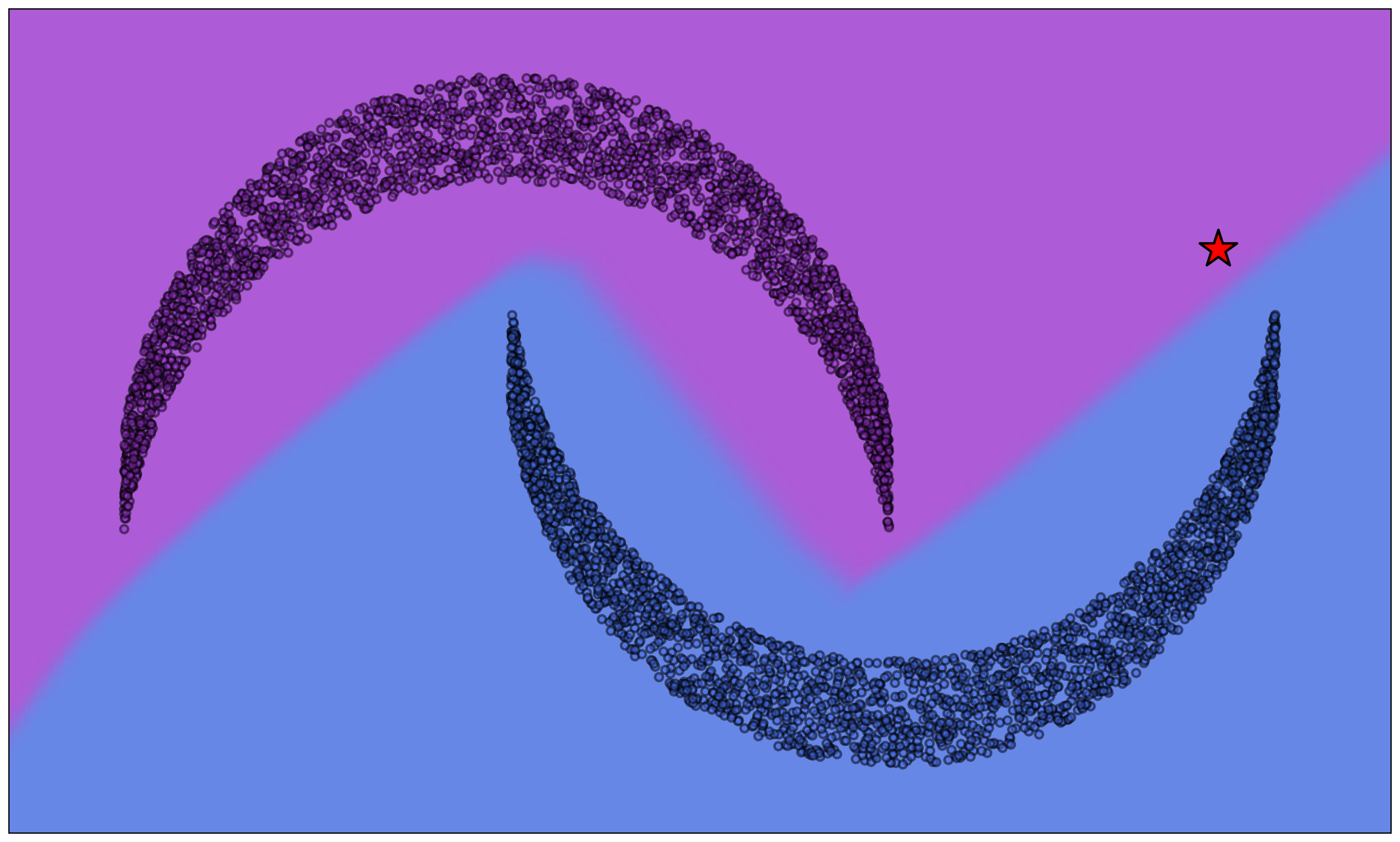}
        \caption*{(a) Using a neural network}
    \end{minipage}
    \hfill
    \begin{minipage}[b]{0.32\linewidth}
        \centering
        \includegraphics[width=\linewidth]{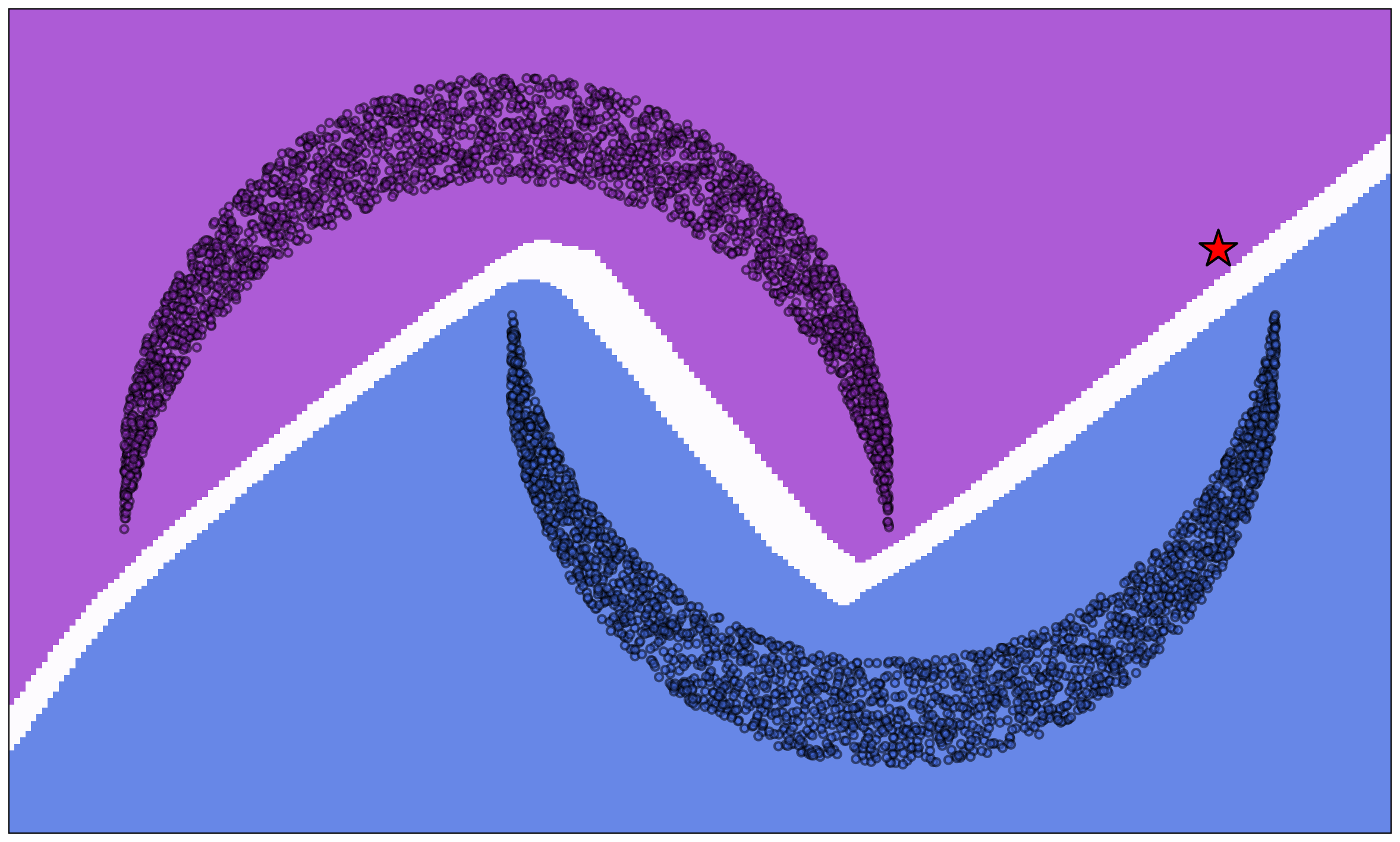}
        \caption*{(b) and a reject option}
    \end{minipage}
    \hfill
    \begin{minipage}[b]{0.32\linewidth}
        \centering
        \includegraphics[width=\linewidth]{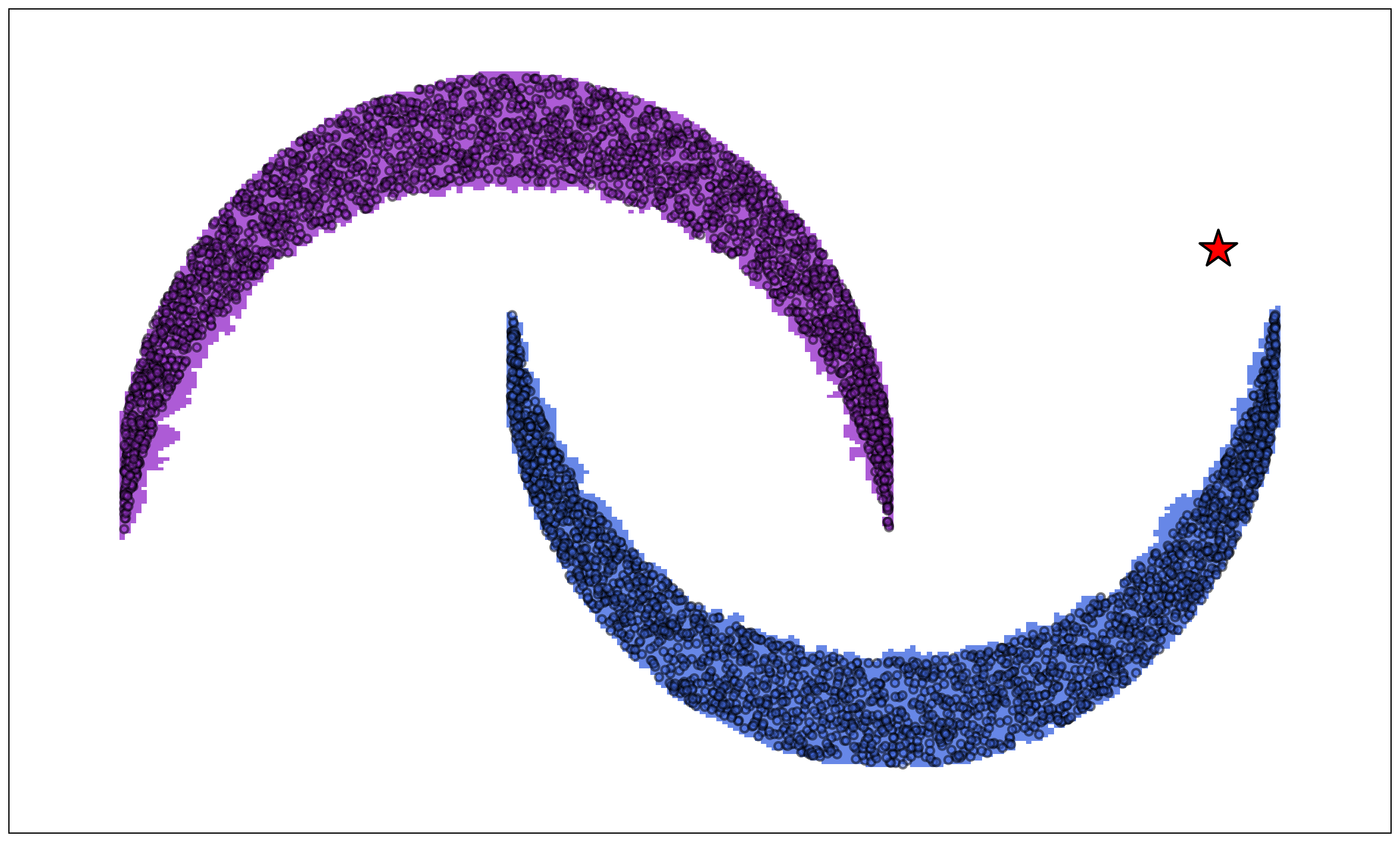}
        \caption*{(c) Using two Polyra Swarms.}
    \end{minipage}
    \caption{Classification results on the two moons dataset using various models. The neural network on the left generally classifies every point in input space, often without basis. For example, the red star is much closer to blue points than to purple ones but is still classified as purple. The Polyra swarms on the right side instead only identify the region that matches one class and thus reject samples outside of these regions. More details of our experimental setup can be found in Appendix~\ref{app:classification}. }
    \label{fig:class}
\end{figure}
A shape approximation model generally answers the question: "Does this sample belong to a given class". This is very close to the goal of classification algorithms, and thus we can use one Polyra swarm to identify each class. 
This has some benefits and drawbacks over a function approximation algorithm approximating the decision boundary. First of all, since we are not learning the boundary between two regions, but describe different classes each, we can add and remove classes without needing to retrain the whole model. A classification setup trained to separate dogs from cats can be partially reused to separate dogs from horses. Additionally, as Figure~\ref{fig:class} shows, this inherently includes a very effective reject option~\cite{rejectsurvey}. However, this might not always be desirable. In contrast to the neural network model, which classifies any sample as either class 1 or 0, a Polyra swarm classifier can also classify a sample as "neither" (reject option) or as "could be both". While the last case does not happen in Figure~\ref{fig:class}, it is prevalent in real applications like the MNIST classification we consider in Appendix~\ref{app:mnistclass}. 



\subsection{Anomaly Detection}\label{sec:ad}
\begin{figure}[htbp]
    \centering
    \includegraphics[width=0.8\linewidth]{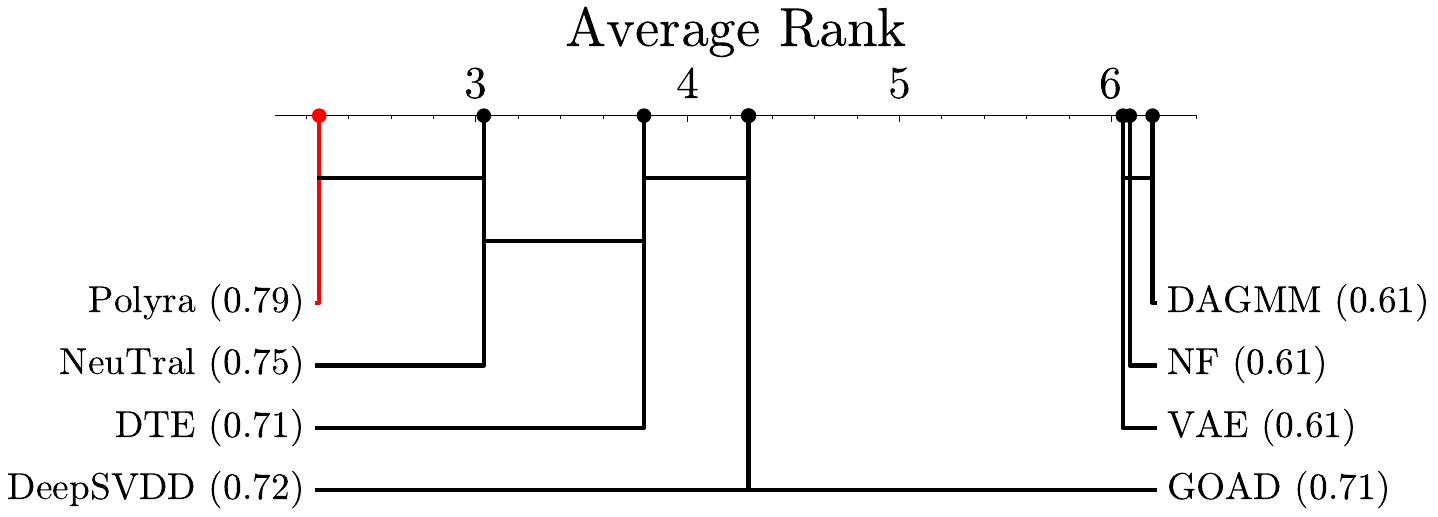}
    \caption{Anomaly detection critical difference plot comparing Polyra swarms to recent/common deep learning anomaly detection methods (Using a Friedmann~\cite{friedman} and Wilcoxon~\cite{wilcoxon} test and a Bonferroni-Holm correction~\cite{correction}). We use 121 common benchmark datasets taken from a recent survey~\cite{surveyzhao} in the semi-supervised setting. As anomaly detection is usually evaluated with continuous anomaly scores (See Appendix~\ref{app:rocbias}), we calculate the fraction of submodels that consider a sample invalid as anomaly score. More about this experiment can be found in Appendix~\ref{app:anomaly}. }
    \label{fig:better}
\end{figure}

Anomaly Detection in general, tries to answer whether a sample belongs to the class "normal" or not. Thus shape approximation is perfectly suited to anomaly detection. We evaluate this in Figure~\ref{fig:better} using 7 recent neural network (and thus function approximation-based) anomaly detection algorithms on the benchmark datasets proposed by a recent survey paper~\cite{surveyzhao}. We find that Polyra Swarms outperforms all our competitors and does so significantly for most of them.

\subsection{Regression and Uncertainty Estimation}\label{sec:reg}
\begin{figure}[htbp]
    \centering
    \includegraphics[width=0.8\linewidth]{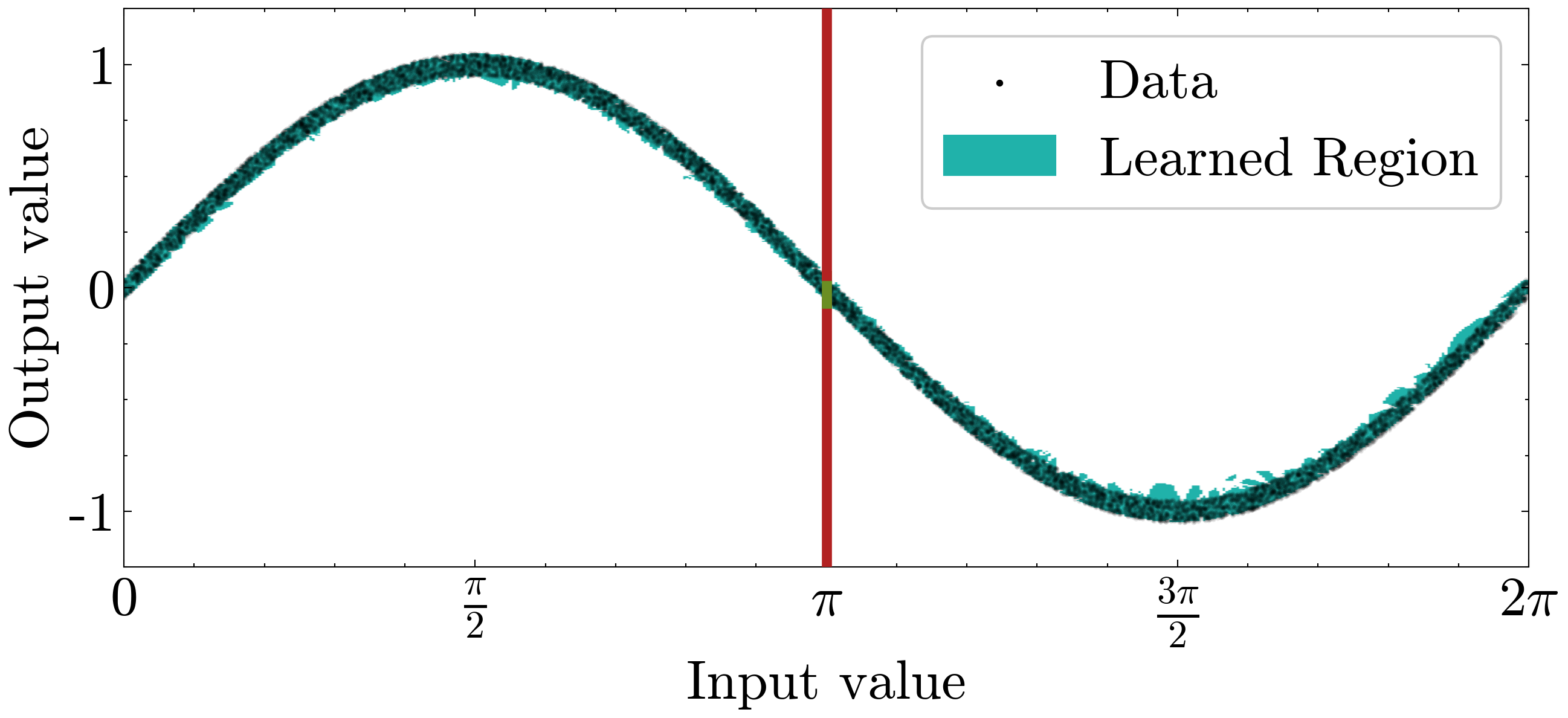}
    \caption{Example of a function ($\sin(x)+5\%\textrm{ error}$) approximated by a Polyra swarm. For $x=\pi$, the valid range of the model (green region) lies between $-0.070$ and $0.054$, closely following the expectation of $0\pm0.05$. Please note that the generated range is exact, but requires the Rangefinder algorithm as explained in Appendix~\ref{app:abs:1d} and is thus relatively slow ($O(0.1s)$). More information about this experiment can be found in Appendix~\ref{app:regression}.}
    \label{fig:reg}
\end{figure}

While tasks like classification and anomaly detection are well suited to shape approximation, regression, as the task of approximating a function is more suited to function approximation. However, every function \( f : \mathbb{R}^d \to \mathbb{R}^n \)
is still also a shape in $\mathbb{R}^{d+n}$, which we can approximate. Interestingly, while doing so we also approximate the uncertainty of the curve (See Figure~\ref{fig:reg}).

\subsection{Point generation}\label{sec:gen}

\begin{figure}[H]
    \centering
    \includegraphics[width=0.4\linewidth]{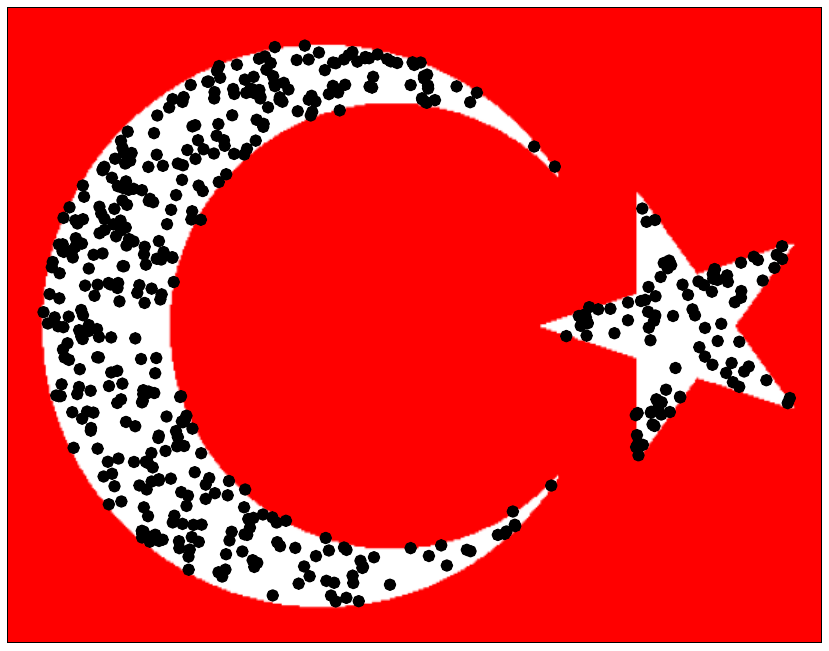}
    \caption{Example of samples generated from a Polyra swarm. We first fit a swarm to represent the white part of the flag of Türkiye and then generate random samples from this model. These samples fit well to the original flag and are visually uniformly distributed over both the half-moon and the star. More information about this experiment can be found in Appendix~\ref{app:generative}. Experiments on higher dimensional samples ($>2$) can be found in Appendix~\ref{app:zerogen}.}
    \label{fig:turk}
\end{figure}

Most generative models work by considering random noise and learning to transform it into samples following the desired distribution~\cite{diffusionCool}. As we can not learn such transformations, this is impossible with Polyra swarms. However, we can more directly test whether a sample fulfills $p(x)=\TRUE$. Thus generative polyra swarms are algorithms that efficiently generate potential candidates inside a given shape. Our example in Figure~\ref{fig:turk} uses a modification of the HitAndRun algorithm~\cite{hitandrun} to work with non-convex shapes (See Appendix~\ref{app:generative}). 

\section{Abstraction}

One of the biggest benefits of neural networks and, thus, function approximation is the ability to automatically learn higher-level features during training~\cite{AbstractionAcrossLayers}. These tend to generalize better~\cite{measuringAbstractionGeneralisation} and are a core reason why neural networks work well on high-dimensional data.

While Polyra Swarms can not learn such features during training, they can still do something similar. A Polyra Swarm is effectively a tree of logical expressions and we can search for other logical trees that describe approximately the same shape as a given Polyra swarm. Our algorithm for this is rather complicated and described in Appendices~\ref{app:abs}, \ref{app:abs:1d}, \ref{app:abs:sample} and \ref{app:abs:lp}. In short, our algorithm converts the logical tree describing a Polyra swarm into disjunctive normal form while removing mainly redundant terms.

An example of what this can do, is shown in Figure~\ref{fig:defrag}. The left side shows an effect similar to overfitting in neural networks, where the shape contains many holes, which simply did not contain any training samples. We call this effect $fragmentation$. Starting from a quite fragmented shape, which is described by $2000$ submodels, the automatically abstracted version on the right side is described by multiple orders less parameters and fits the ground truth significantly better. In fact, this logical tree is simple enough for us to include it in this paper (See Equation~\ref{eqn:abs:example}), in contrast to the black-box problem of neural network approximations~\cite{blackbox}.

\begin{figure}[htbp]
    \centering
    \includegraphics[width=0.8\linewidth]{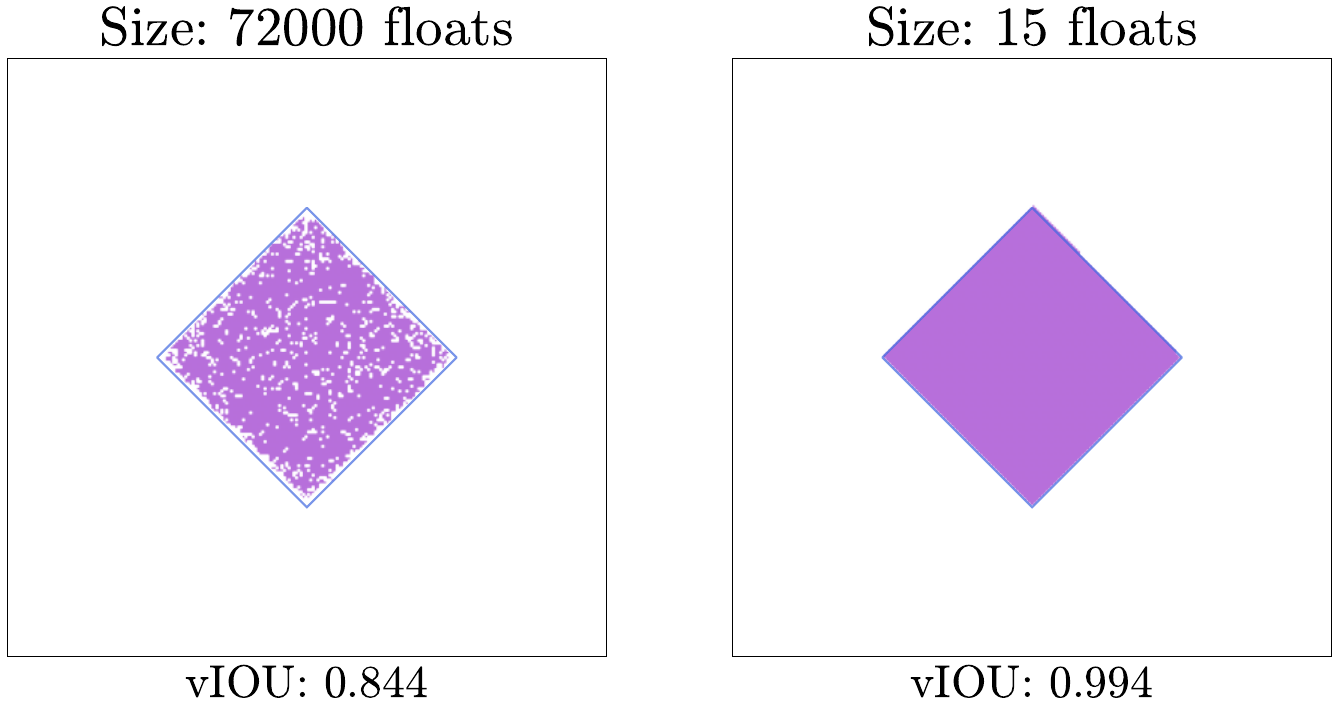}
    \caption{Example of the benefits abstraction can provide. We fit a Polyra swarm to samples uniformly distributed over a simple diamond shape. The learned shape (left plot) does not fit the shape perfectly, as shown by the relatively low volumetricIOU (see Appendix~\ref{app:vIOU}). This could be somewhat manually improved with better hyperparameters (see Appendix~\ref{app:ablation}). Instead, we use our abstraction algorithm (See Appendix~\ref{app:abs:sample}) to produce the right plot, which fits the ground truth close to perfectly. At the same time, we reduce the complexity of our model by a factor $4800$, resulting in a model that is both very accurate, fast, and simple enough to be printed into this paper (Equation~\ref{eqn:abstraction:example:diamond}). More information about this experiment can be found in Appendix~\ref{app:diamond}.}
    \label{fig:defrag}
\end{figure}

\begin{equation}
\begin{bmatrix}
-0.7071 & -0.7072\\ 
-0.7081 & 0.7061\\ 
0.7086 & -0.7056\\ 
0.7155 & 0.6986\\ 
0.7853 & 0.6192\end{bmatrix}\\ 
 \cdot x \leq 
\begin{bmatrix}
-0.5303\\ 
0.1759\\ 
0.1784\\ 
0.8852\\ 
0.8967\end{bmatrix}
\label{eqn:abstraction:example:diamond}
\end{equation}

While the diamond-shaped toy data in Figure~\ref{fig:defrag} is constructed to be easily described by a few halfspace conditions, the same abstraction process also works on real-world datasets. To show this, we consider the classical OldFaithful dataset~\cite{oldfaithful} in Figure~\ref{fig:rw} and the abstracted fit in Equation~\ref{eqn:abstraction:example:rw}.

\begin{figure}[htbp]
    \centering
    \includegraphics[width=0.8\linewidth]{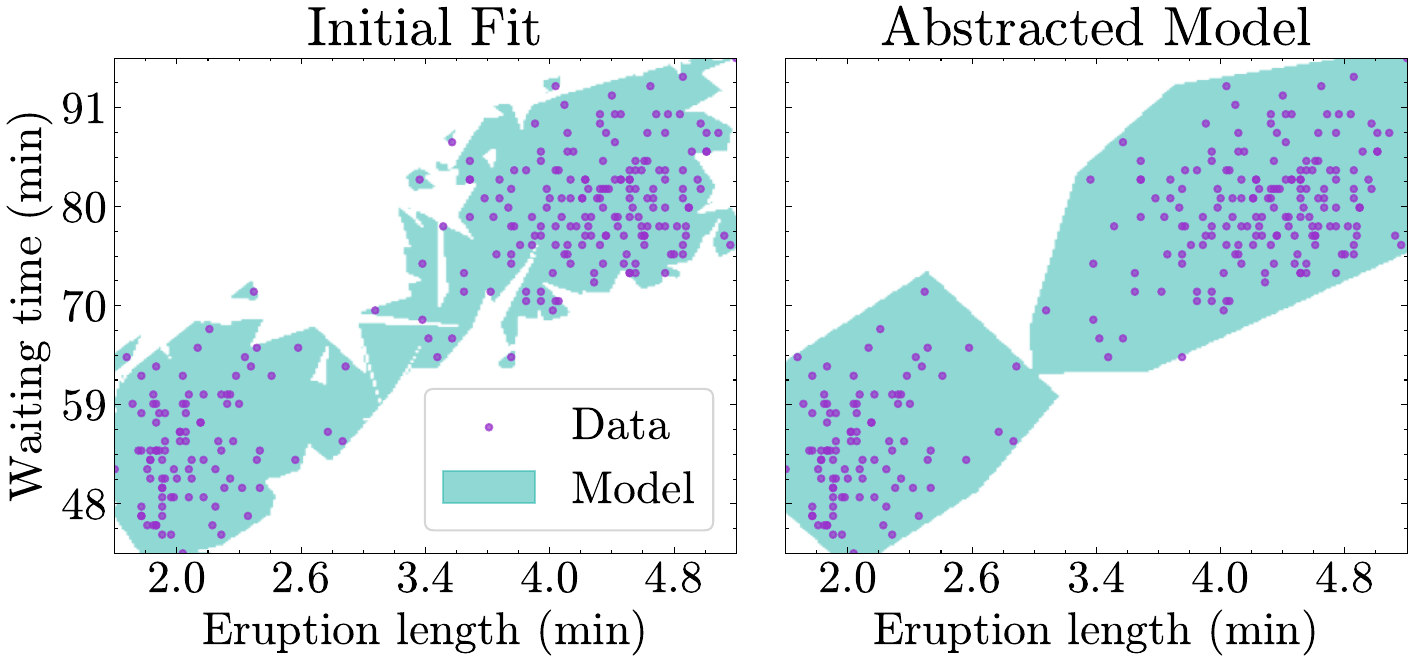}
    \caption{Equivalent of Figure~\ref{fig:defrag} on a real-world dataset. Left: Shape approximation of the OldFaithful dataset~\cite{oldfaithful}. Right: Abstracted version of the same model (See Equation~\ref{eqn:abstraction:example:rw}). More information about this experiment can be found in Appendix~\ref{app:faith}.}
    \label{fig:rw}
\end{figure}
\begin{equation}
\left( \begin{bmatrix}
-0.756 & -0.6546\\ 
0.6207 & -0.784\\ 
0.8259 & -0.5639\\ 
-0.6207 & 0.784\\ 
0.7643 & 0.6448\end{bmatrix}\\ 
 \cdot x \leq 
\begin{bmatrix}
-0.0522\\ 
0.095\\ 
0.1859\\ 
0.3054\\ 
0.5405\end{bmatrix}\right)\\ 
 \bigvee 
\left( \begin{bmatrix}
0.4978 & -0.8673\\ 
-0.7498 & 0.6617\\ 
-0.1934 & 0.9811\\ 
-0.9716 & 0.2366\\ 
0.0267 & -0.9996\\ 
-0.9982 & -0.0606\end{bmatrix}\\ 
 \cdot x \leq 
\begin{bmatrix}
-0.0259\\ 
0.1568\\ 
0.8063\\ 
-0.2722\\ 
-0.3493\\ 
-0.4175\end{bmatrix}\right)
\label{eqn:abstraction:example:rw}
\end{equation}

The abstracted shape in Equation~\ref{eqn:abstraction:example:rw} consists of the union of two polytopes representing the two clusters of points in the OldFaithful dataset. We explore further if we can use abstraction as a clustering algorithm in Appendix~\ref{app:clustering}.

\section{Conclusion, Limitations and Comparison to Neural Networks}\label{sec:conclusion}

\begin{figure}[htbp]
    \centering
    \begin{minipage}[b]{0.48\linewidth}
        \centering
        \includegraphics[width=\linewidth]{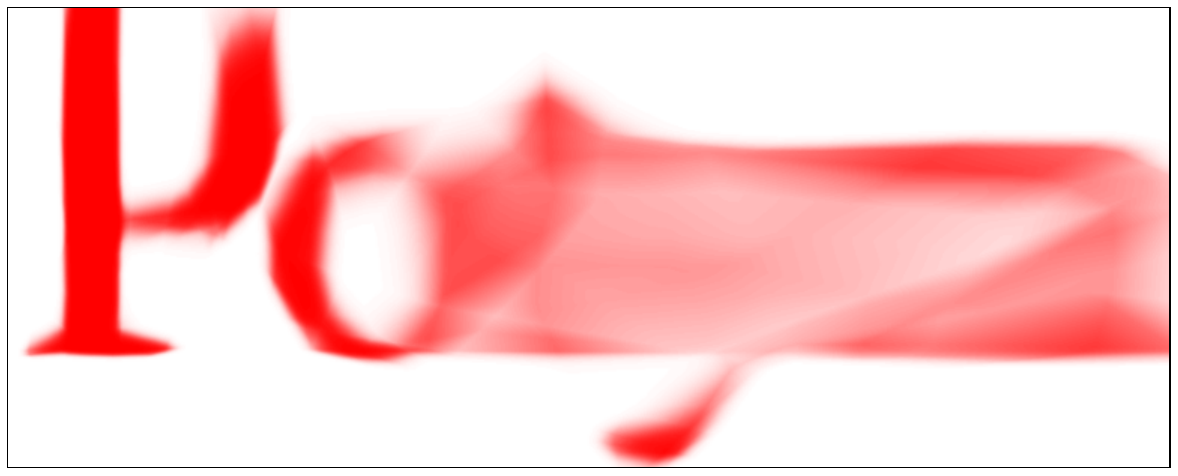}
        \caption*{(a) Using a neural network}
    \end{minipage}
    \hfill
    \begin{minipage}[b]{0.48\linewidth}
        \centering
        \includegraphics[width=\linewidth]{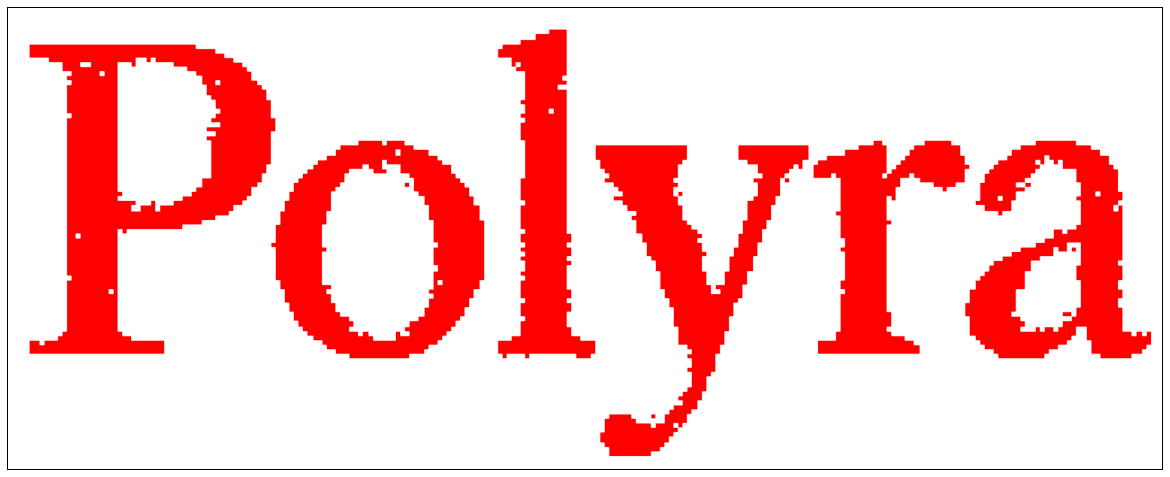}
        \caption*{(b) Using a Polyra Swarm.}
    \end{minipage}
    \caption{Comparison of the precision of a Polyra swarm to a neural network. We search for a function that can decide if a two-dimensional point lies in the graphical representation of the word "Polyra" or not. While both models theoretically are expressive enough to model such a highly complicated function, the neural network gets stuck in a local minima and only the polyra swarm learns a shape representing readable text. More information about this experiment can be found in Appendix~\ref{app:textapprox}.}
    \label{fig:text}
\end{figure}

We have shown that polyra swarms can be very useful in certain situations and have nice properties: Because we learn shapes instead of functions, there are tasks like anomaly detection and classification that polyra swarms are more suited to. Similarly, since we do not rely on gradient-based optimization, it is impossible to get stuck in local minima, resulting in an often more precise model (See Figure~\ref{fig:text}). Additionally, using abstraction, we can increase the generalization capabilities, the explainability, and the prediction speed of a learned model. However, polyra swarms are not without faults. To help mitigate them and provide a fair comparison to neural networks, we will mention them in this section.

\subsection{High Dimensional Data - Precision instead of Scale}\label{sec:c:highdim}
Most of the experiments in this paper have been conducted in two dimensions only. This is partially because this allows visualizing our results. However, it is also the case that polyra swarms do not handle high-dimensional shapes equally well. Following the proof in Appendix~\ref{app:usaproof}, we require at least $dim+1$ conditions per condition polytope to approximate arbitrary shapes. Additionally, generative models tend to struggle with high-dimensional data~\ref{app:zerogen}, and so far, the fastest general abstraction algorithm has a runtime that is at least cubic in $dim$ (Appendix~\ref{app:abs:lp}).
Thus neural networks are generally better for high-dimensional data.

However, we find it debatable if it ever should be the goal to replace the very large amount of research that has been put into neural networks already.
Instead, we believe there to be tasks where neural networks and function approximation are best suited, and tasks where shape approximation methods like Polyra Swarms are superior.
Furthermore, it is possible to combine the generally highly precise Polyra Swarms with existing feature representation algorithms, like PCA~\cite{pca}, autoencoder~\cite{aeusedimensionality} or contrastive approaches~\cite{personReidSurvey}. This reduces the dimensionality and cancels out the limitation of Polyra Swarms. We use this in Appendices~\ref{app:mnistclass} and \ref{app:zerogen}.


\subsection{Bias towards Convex shapes}\label{sec:c:convex}
Since we represent the base regions ($A_i$,$B_i$) in Equation~\ref{eqn:submodel} as polytopes, which represent convex regions, the resulting fits also have a bias towards learning locally convex shapes (In Figure~\ref{fig:text} there are more errors inside enclosed regions then next to them. This is even more visible in Appendix~\ref{app:opt}).
This is a bias since it unreasonably favors one type of shape over another.
However, since overfitting/fragmentation represents itself also as holes in a learned shape (Figure~\ref{fig:defrag}), it is often beneficial to focus on convex shapes. We include a hyperparameter to control the tradeoff between the convexity and complexity of a learned shape (See Appendix~\ref{app:h:minpoi}).

\subsection{Long-tail distributions}\label{sec:c:longtail}
We generally try to learn a region that contains all possible points. This makes distributions with long tails (like a normal distribution) conceptually challenging. Since the region in which a point might theoretically be observed is often arbitrarily large, we are not able to learn any meaningful shape here. In practice, the likelihood of points far away from the center is commonly exponentially suppressed, and thus, we are still able to learn a shape containing all observed points. The limitation lies in the fact that this shape will be very fragmented in low-density regions and will not generalize well to other samples observed by the same distribution. However, as Appendix~\ref{app:linearisation} shows, this can be mitigated by a clever choice of hyperparameters.




\section{Future Work}

To propose an algorithm achieving general-purpose learning requires considering many different fields. While we believe it to be interesting to focus on the way applying shape approximation requires rethinking existing chains of thought, this also means we could only experiment with most tasks qualitatively in the main paper. Thus, we believe this to only be the start of the research into polyra-like shape approximation algorithms. Many decades of research time have been spent trying to understand and improve neural networks. We believe it is proof of the large potential that Polyra Swarms have, that they can already be competitive with neural networks on certain tasks. To further help with this research, we want to suggest a few research directions.

We implement all functionality used in the experiments of this paper into a python library, allowing any researcher to use polyra swarms for their own tasks. And while the runtime of polyra swarms has not been a problem so far, we could likely still make this library more efficient by for example implementing GPU support.
Similarly, we are interested in different base regions. Polytopes generally become more complicated in higher dimensions, limiting their applicability without feature extraction. Instead, when considering, for example, hyperspheres, this dependency disappears. Additionally, similar to convolutional layers for neural networks there might be base-regions that are optimal for different types of data.
Finally, our abstraction algorithms are currently more heuristic than principled, with many design choices significantly influencing their performance. A more systematic study of these choices could lead to substantially more effective abstraction methods.






\bibliographystyle{plainnat}
\bibliography{thesis,posi,fp,new}








\appendix

\section{Notation Cheat Sheet}\label{app:notation}
To help understanding our work, we provide a summary of our used notation in the following table.

\noindent
\begin{tabularx}{\textwidth}{>{\raggedright\arraybackslash}X >{\raggedleft\arraybackslash}m{3cm}}
\textbf{Description} & \textbf{Symbol} \\
\hline
Shape & $Q$ (capital letter)\\
Indicator function & $q(x)$ (lowercase letter)\\
Set of training samples & $X_{train}$\\
Polyra Swarm & $P$,$p(x)$\\
Base-shape & $F_i$, $f_i(x)$\\
Condition polytope & $A_i$\\
Consequent polytope & $B_i$\\
Approximation error & $\eta$\\
Number of conditions in a Polytope & $K$\\
Polytope constraint matrix & $M$\\
Polytope bounds vector & $b$\\
Normal distribution & $\sim\mathcal{N}(\textrm{mean}, \textrm{std})$\\
Uniform distribution & $\sim\mathcal{U}(\textrm{min}, \textrm{max})$\\
Complement of a shape & $Q^{\complement}$\\
\label{tab:note}
\end{tabularx}

\section{Proof of Universal Shape Approximation through Polyra Swarms}\label{app:usaproof}

\begin{proof}
    
    From Definition~\ref{def:shape}, we know that a shape is a measurable bounded subset of space with a finite length boundary.
    To prove that a Polyra Swarm can approximate such a set $S$, we rewrite Equation~\ref{eqn:merge} into
    \begin{equation}
        P=\left(\bigcup_{i=0}^N F_i^\complement\right)^{\complement}=\left(\bigcup_{i=0}^N (A_i\setminus B_i)\right)^\complement
        \label{eqn:altmerge}
    \end{equation}
    Thus, each base shape effectively carves out a region that it considers impossible. We can use $2\cdot dim$ such submodels to reduce every point where one feature $h<\min_{h}(S)$ or $h>\max_{h}(S)$. Since $S$ is bounded, the resulting shape $P_{approx}$ is finite and we refer to its volume as $V$. Any such finite, square region can be entirely filled with dim-simplices of volume $\nu$. We choose the remaining $A_i$ to represent these simplices (Thus $(K_A)_i=dim+1$) and the matching $B_i$ so that $A_i \cap B_i$ approximates $A_i \cap S$ best. 
    
    Doing so, for each dim-simplex $A_i$ there are three options. 
    \begin{enumerate}
        \item If $A_i \subseteq S$, we can choose $B_i=A_i$ and incur zero error ($\eta=0$).
        \item Similarly, if $A_i \cap S = \emptyset$, we can choose $B_i=\emptyset$ and also incur zero error ($\eta=0$).
        \item Only if $A_i\cap S \neq \emptyset $ and $ \neq A_i \subseteq S$ we need to choose a possibly imperfect $B_i$ and might incur an error. Importantly this error is still bounded by the volume $\nu$.
    \end{enumerate}
    So the total error of our approximation is bounded by $\nu\cdot C(\nu)$, where $C(\nu)$ is the number of dim-simplices of volume $\nu$ that intersect the boundary.

    We want to show now that this term becomes arbitrarily small when we reduce $\nu$: $\lim_{\nu\rightarrow 0} \nu\cdot C(\nu)=0$.
    Since the boundary has finite length, it can be locally well approximated by a (d-1)-dimensional hyperplane for sufficiently small $\nu$.

    We assume each dim-simplex to have approximately equal side-lengths. This means, we expect in each dimension there to be $\left(\frac{V}{\nu}\right)^{\frac{1}{dim}}$ dim-simplices, of which every hyperplane only intersects $O(1)$ many and thus the fraction $C(\nu)\propto O(\nu^{-\frac{dim-1}{dim}})$.

    Combining this together, we find that 
    \begin{equation}
        \lim_{\nu\rightarrow 0} \nu\cdot C(\nu)\propto\lim_{\nu\rightarrow 0} \nu\cdot \nu^{-\frac{dim-1}{dim}} =0
    \end{equation}

    Thus the erroneous volume $\eta=\|S\setminus P \cup P\setminus S\|$ goes to zero as $\nu\rightarrow 0 \Leftrightarrow N\rightarrow\infty$, proving our conjecture.

\end{proof}

\section{Proof that all training samples are included in the shape learned by a Polyra Swarm}\label{app:alltrueproof}
\begin{proof}
    Assume that there would be a point $x\in X_{train}\text{, where } p(x)=\FALSE$.
    
    This means, there is at least one base shape $\exists i\text{, so that }f_i(x)=\FALSE$. 
    
    Further, this implies $x\in A_i$ and $x\notin B_i$.
    
    Focussing on the second part, this means that there $\exists j\text{, so that } (M_B)_i^j \cdot x > (b_B)_i^j$.

    However, following Equation~\ref{eqn:maximaB}, we know that $(b_B)_i^j=\max_{ (x\in X_{train}) \;\cup\; (x\in A_i)} (M_B)_i^j\cdot x$. And since $x\in X_{train}$ and $x\in A_i$, we know that $(b_B)_i^j\ge x$ and this is contradictory. Thus every sample in the training set $X_{train}$ lies in the shape $P$.
\end{proof}

\section{Volumetric IOU}\label{app:vIOU}
To evaluate whether a learned shape (L) fits a given ground truth shape (T), many metrics have been suggested to compare such sets~\cite{vIOU}. Here, we use the Volumetric Intersection Over Union. It is defined as
\begin{equation}
    \textrm{vIOU}=\frac{\textrm{Volume}(L\cap T)}{\textrm{Volume}(L\cup T)}
    \label{eqn:vIOU}
\end{equation}

In case of a perfect fit, $L=T$ and thus $\textrm{vIOU}=1$, while in the worst case, there is no overlap between $L$ and $T$, and thus $\textrm{vIOU}=0$.

\section{Influence of various Hyperparameters and Ablation Studies}\label{app:ablation}

Polyra Swarms generally contain many hyperparameters. In each of our experiments, we typically show only the result of a well-parametrized swarm. We keep from optimizing these hyperparameters excessively to show reasonable results possible using Polyra Swarms with limited effort. Still, we believe it is important to show the effect each hyperparameter has, which is why we study variations of each parameter in this section on a toy dataset.

This toy dataset is shown in Figure~\ref{fig:inihyper} together with a fit using default hyperparameters and another fit using a more reasonable choice of hyperparameters (minpoi: $500$, extend: $0.1$, submodel-count: $10000$). The more reasonable set of hyperparameters produces a better-looking fit, which is also confirmed using the vIOU metric (Appendix~\ref{app:vIOU}).

\begin{figure}[H]
    \centering
    \includegraphics[width=0.6\linewidth]{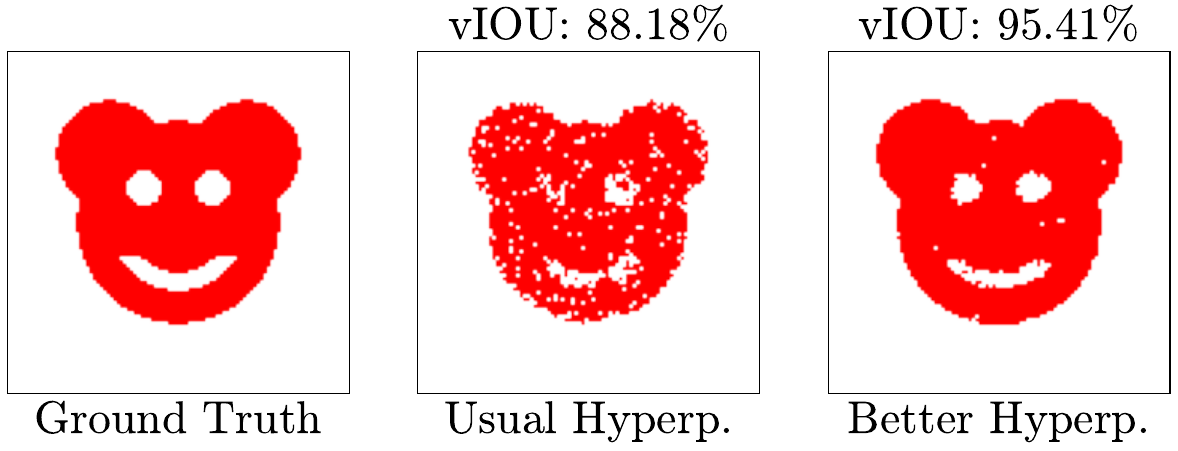}
    \caption{From left to right: The ground truth of our toy dataset, the area of a Polyra Swarm with default hyperparameters fitted to samples following our toy dataset, and the same with better chosen hyperparameters.}
    \label{fig:inihyper}
\end{figure}

We will now discuss reasonable ranges of each hyperparameter by varying only one parameter in each subsection. For every Polyra Swarm, we state the volume of the learned shape, aswell as the vIOU quality compared to the ground truth. We also plot the learned area for one indicative hyperparameter value.

Every other hyperparameter is kept at its default parameter (Table~\ref{tab:h:default}). Still, of course, it is often useful to vary hyperparameters in tandem (for example one hyperparameter that increases the volume and one hyperparameter that decreases it). An example of this is studied in Appendix~\ref{app:linearisation}.

\begin{table}[h]
\centering
\begin{tabular}{|lr|}
\hline
\textbf{Hyperparameter} & \textbf{Value} \\
\hline
Adim      & $2$   \\
Bdim      & $2$   \\
Extend      & $0$   \\
Minpoi      & $0$   \\
Quantile      & $0$   \\
Subsample      & $0$   \\
Sample count     & $10000$  \\
Model count       & $1000$   \\
\hline
\end{tabular}
\caption{Default hyperparameter settings used in every experiment unless mentioned differently.}
\label{tab:h:default}
\end{table}

\subsection{Adim}\label{app:h:adim}
The first hyperparameter we study is the number of conditions in the condition polytope ($A$ in Equation~\ref{eqn:submodel}). To truly approximate every possible shape, we require $Adim=dim+1$ (with the dimensionality of the data $dim$), as seen in Appendix~\ref{app:usaproof}. Still, most shapes can already be approximated well with a lower value of $Adim$. However, as Figure~\ref{fig:h:adim} shows, holes in the learned shape become impossible when $Adim$ is chosen too small. Additionally, it can also be useful to choose $Adim$ higher to make the learning algorithm focus more on small features. We do this in the the fit shown in Figure~\ref{fig:eye}.

\begin{figure}[H]
    \centering
    \includegraphics[width=0.6\linewidth]{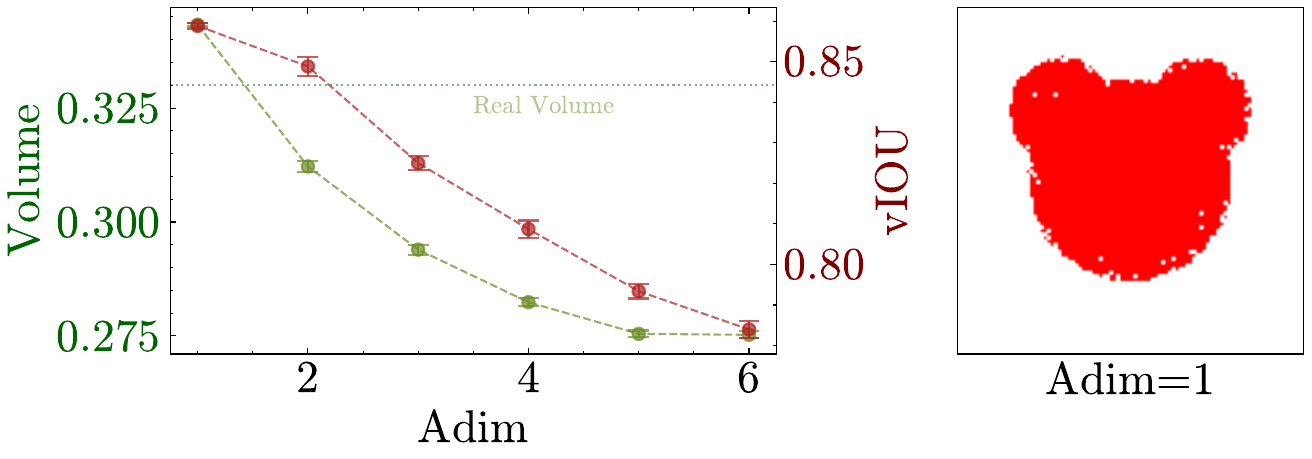}
    \caption{Hyperparameter influence of the $Adim$ hyperparameter.}
    \label{fig:h:adim}
\end{figure}

\subsection{Bdim}\label{app:h:bdim}
Next, we consider the number of conditions in the consequent polytope in each submodel ($B$ in Equation~\ref{eqn:submodel}) in Figure~\ref{fig:h:bdim}. This parameter is significantly less important than $Adim$, and serves more as a potential speedup. Effectively, it can be seen as a cheaper way of increasing the number of effective submodels the swarm uses. Compare for this Figure~\ref{fig:h:bdim} and Figure~\ref{fig:h:count}.

\begin{figure}[H]
    \centering
    \includegraphics[width=0.6\linewidth]{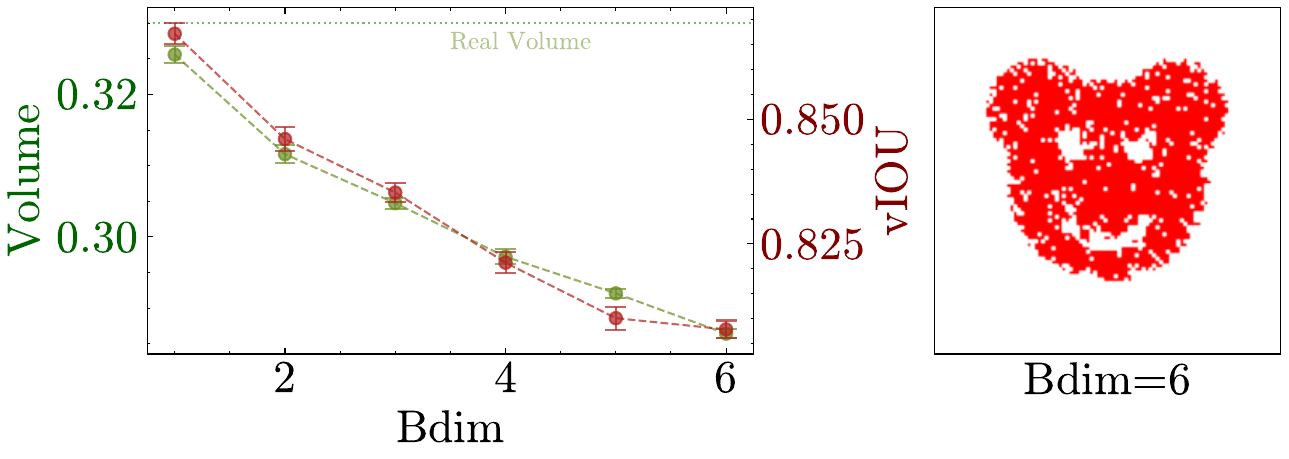}
    \caption{Hyperparameter influence of the $Bdim$ hyperparameter.}
    \label{fig:h:bdim}
\end{figure}

\subsection{Extend}\label{app:h:extend}
To learn the consequent region ($B_i$ in Equation~\ref{eqn:submodel}), we select the constraint region $(b_B)_i$ as the most extreme values allowed in the condition polytope (See Equation~\ref{eqn:maximaB}). However, it is unlikely that the most extreme value possible in each direction has already been observed in the training set $X_{train}$. Thus, we introduce a hyperparameter to extend the range. In total, the range of allowed values is evenly extended by a factor $1+extend$. So given values observed between $0$ and $1$ with an extend of $0.2$, we would allow values between $-0.1$ and $1.1$ in a given submodel.

As visible in Figure~\ref{fig:h:extend}, increasing this parameter can help reduce the fragmentation of the learned region. And while a higher value of $extend$ also increases the volume, there is a sweet spot between false positives and false negatives.

\begin{figure}[H]
    \centering
    \includegraphics[width=0.6\linewidth]{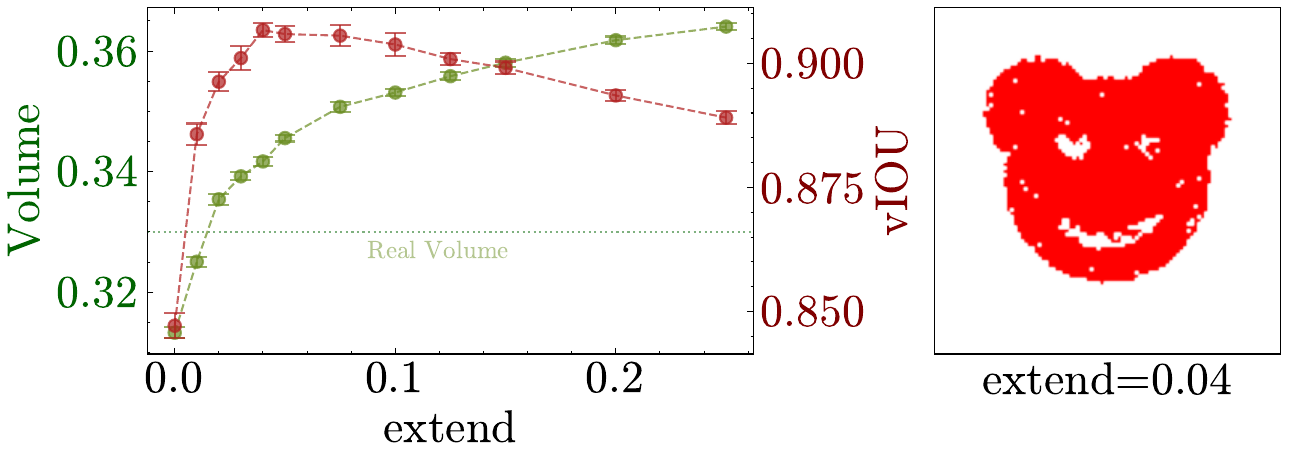}
    \caption{Hyperparameter influence of the $extend$ hyperparameter.}
    \label{fig:h:extend}
\end{figure}

\subsection{Minpoi}\label{app:h:minpoi}
An alternative to reduce the fragmentation of the learned shape is to demand that a minimum number of samples ($minpoi$) are included in the condition polytope. This removes small condition polytopes, which tend to lead to highly fragmented shapes. This can be seen as a hyperparameter controlling the convexity of the learned shape. If we demand that every sample is included in each condition polytope, it would become trivial and we would only learn the consequent polytopes, resulting in a convex shape. This can be seen in Figure~\ref{fig:h:minpoi}, where a high value of $minpoi$ results in the mouth and eyes not being learned.

\begin{figure}[H]
    \centering
    \includegraphics[width=0.6\linewidth]{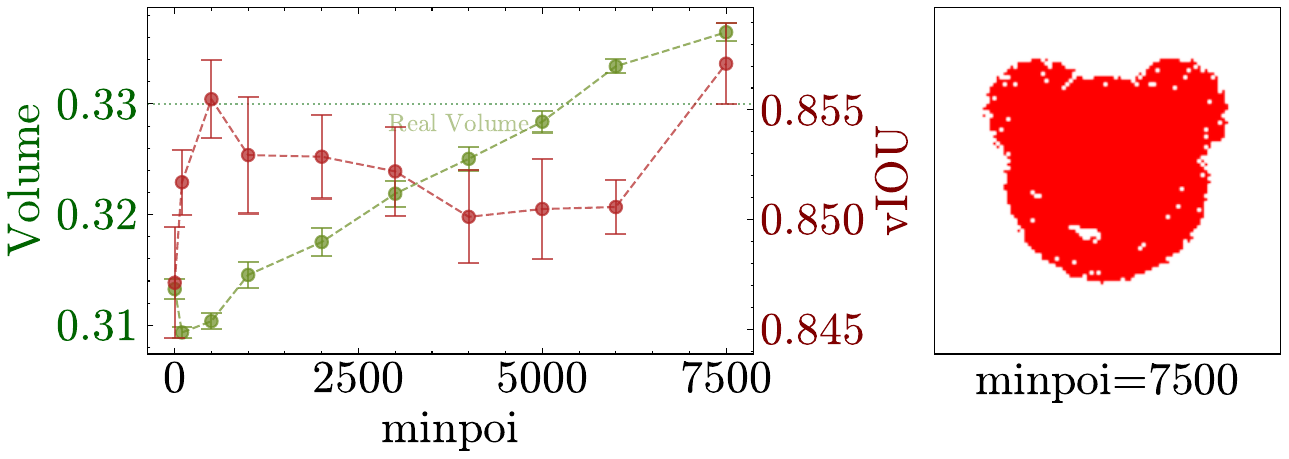}
    \caption{Hyperparameter influence of the $minpoi$ hyperparameter.}
    \label{fig:h:minpoi}
\end{figure}

\subsection{Quantile}\label{app:h:quantile}
The opposite effect of $extend$ is achieved with the $quantile$ parameter. A $quantile>0$ means that only a fraction of $1-quantile$ is used to select the most extreme values in Equation~\ref{eqn:maximaB}. This hyperparameter breaks the assumption that every training sample is considered inside the learned region (Appendix~\ref{app:alltrueproof}). Furthermore, as Figure~\ref{fig:h:quantile} shows, this parameter increases the fragmentation of the learned shape.
This parameter is most useful, when also varying other hyperparameters. See for this Appendix~\ref{app:linearisation}.
\begin{figure}[H]
    \centering
    \includegraphics[width=0.6\linewidth]{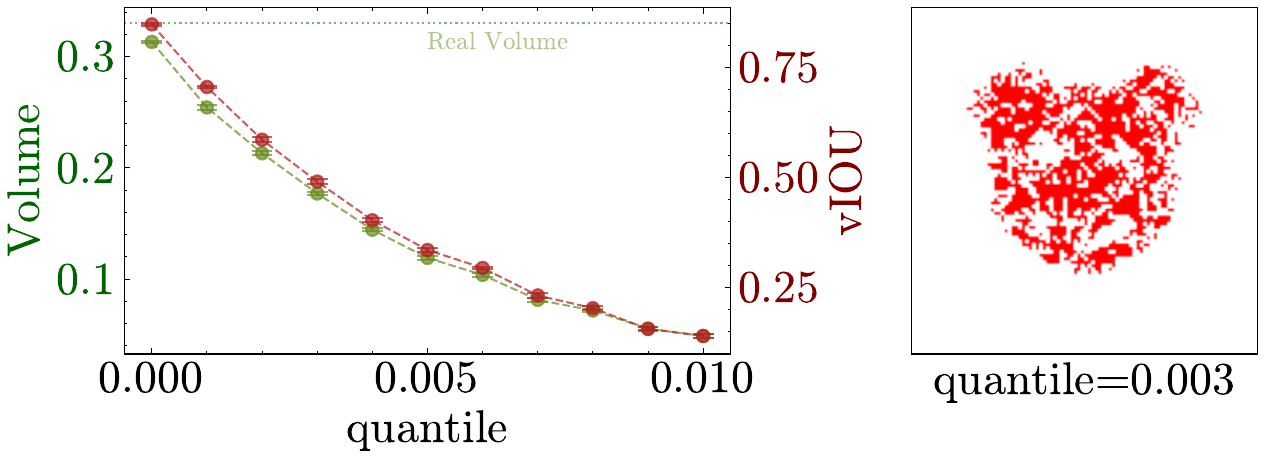}
    \caption{Hyperparameter influence of the $quantile$ hyperparameter.}
    \label{fig:h:quantile}
\end{figure}

\subsection{Subsample}\label{app:h:subsample}
A classical approach for one-class classification/outlier detection ensemble methods is to use subsampling~\cite{subsamp} (See Figure~\ref{fig:h:subsample}). Here every submodel is trained only on $1-subsample$ of the training samples. This increases fragmentation and also breaks the assumption that every training sample is considered inside the learned shape. So far, the only benefit of this parameter we have found is the potential speedup, but we include it here because of potential comparisons to outlier ensembles.
\begin{figure}[H]
    \centering
    \includegraphics[width=0.6\linewidth]{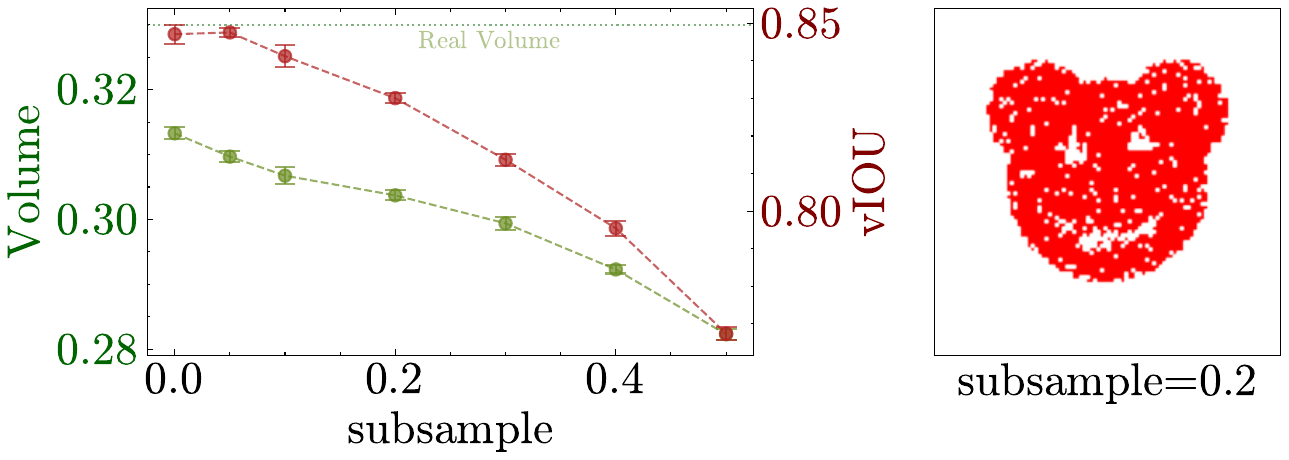}
    \caption{Hyperparameter influence of the $subsample$ hyperparameter.}
    \label{fig:h:subsample}
\end{figure}

\subsection{Dataset Size}\label{app:h:datasize}
While not strictly a hyperparameter, we also study the effect a different training set has. As Figure~\ref{fig:h:datasize} shows, a larger data size generally increases both the volume and, at the same time also, the performance. This implies that by reducing the dataset size, we increase primarily the number of false negatives and, thus, the fragmentation. In the extreme, we have so few data points that only these are considered inside of the shape, and the volume goes to zero from $O(100)$ samples. However, as we can still fit Polyra Swarms to small datasets (e.g. Figure~\ref{fig:rw} with $272$ samples), this is controllable with various hyperparameters.

\begin{figure}[H]
    \centering
    \includegraphics[width=0.6\linewidth]{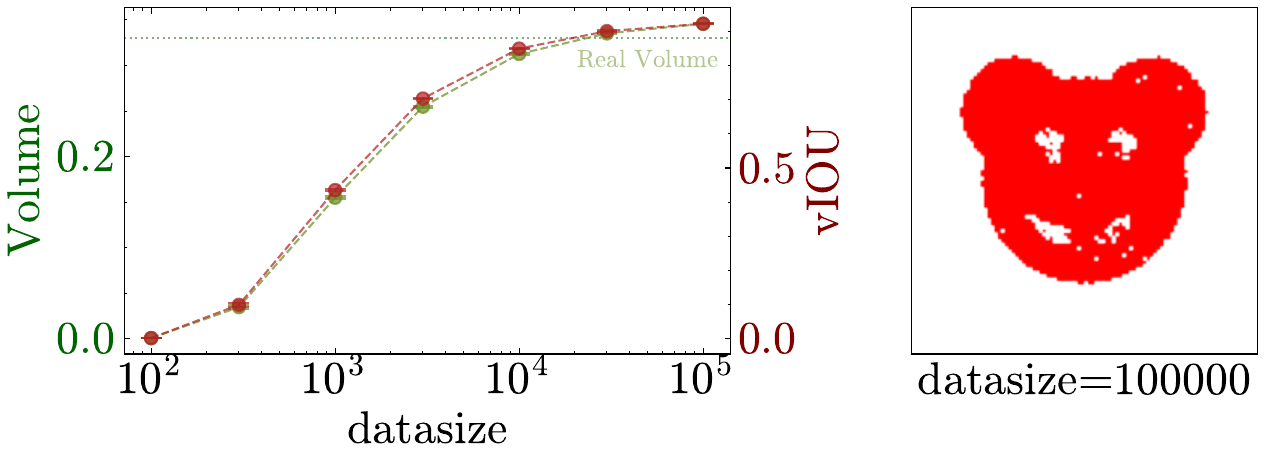}
    \caption{Influence of the dataset size/sample count.}
    \label{fig:h:datasize}
\end{figure}

\subsection{Ensemble Size}\label{app:hyper:ensemblesize}
The most important parameter to change for small datasets is the number of submodel equations used. Since every equation cuts of parts of the input space, the volume decreases with larger ensembles. This is shown in Figure~\ref{fig:h:datasize}. Thus, to fit Polyra Swarms to smaller datasets, we use fewer submodels, and to larger datasets (Figure~\ref{fig:eye} uses one million submodels), we can and should use more.

\begin{figure}[H]
    \centering
    \includegraphics[width=0.6\linewidth]{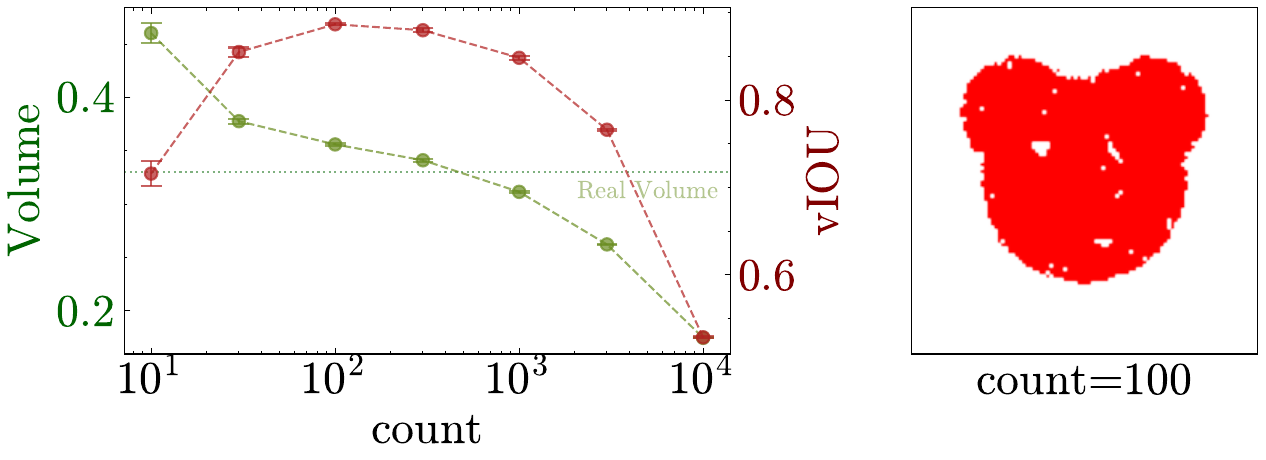}
    \caption{Influence of the number of submodels.}
    \label{fig:h:count}
\end{figure}

\section{Linearisation}\label{app:linearisation}
In the previous appendix, we mainly studied the effect singular hyperparameter changes have on a learned model. However, interesting effects also happen when we combine multiple hyperparameters. In Figure~\ref{fig:linearisation}, we summarize how we can use this to achieve an effect that we call \emph{linearisation}. This effect uses both the $quantiles$ and the $extend$ hyperparameter to, at the same time, cut off extreme samples in long tail distributions and extend the area canceling out this effect.
\begin{figure}[H]
    \centering
    \includegraphics[width=0.6\linewidth]{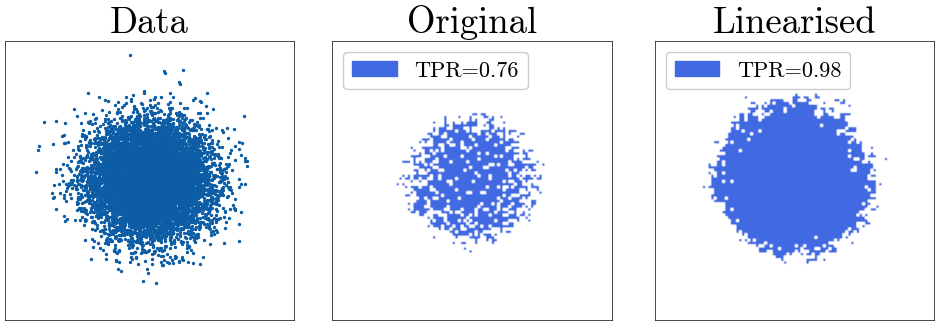}
    \caption{Example of a hyperparameter combination that we name \emph{linearisation}. When approximating the shape of a distribution with long tails, like the normal distribution shown on the left, the low likelihood of points further away from the mean results in a highly fractured distribution in low-density regions. This can be seen in the fit with default hyperparameters in the middle. Only $76\%$ of points following the same normal distribution are considered inside of the shape. Instead, we consider a combination of the quantile hyperparameter ($quantiles=0.025$) and the extended parameter ($extend=0.2$). While the quantiles hyperparameter removes noise in the low-density region of the distribution, the extend hyperparameter extends each model to still include the same range. This results in the much better fitting shape on the right side, which considers $98\%$ of normal samples inside the shape. Notice that because of the infinite tails of the gaussian distribution, we can not aim for $100\%$ here.}
    \label{fig:linearisation}
\end{figure}

\section{Polyra Name Inspiration}\label{app:name}
Each submodel of our ensemble cuts away a part of the input space until only the desired shape remains. This is very similar to how we imagine piranhas to eat a much larger animal. Thus, we decided to call our method a portmanteau word between "piranha" and the "polytope"s we use. Thus we call our models "Polyra Swarms".

\section{ROC-AUC bias towards Anomaly Scores instead of Binary Decisions}\label{app:rocbias}
When comparing our Polyra Swarms as anomaly detection algorithms, there is a fundamental difference to most other anomaly detection algorithms. Instead of the binary decisions provided by Polyra Swarms, these algorithms usually generate continuous anomaly scores (where a higher anomaly score implies a sample to be more anomalous)~\cite{metasurvey}. As we show in Figure~\ref{fig:rocbias}, this generally increases the ROC-AUC quality such a method achieves, even when the method used is the exact same.

\begin{figure}[htbp]
    \centering
    \includegraphics[width=0.8\linewidth]{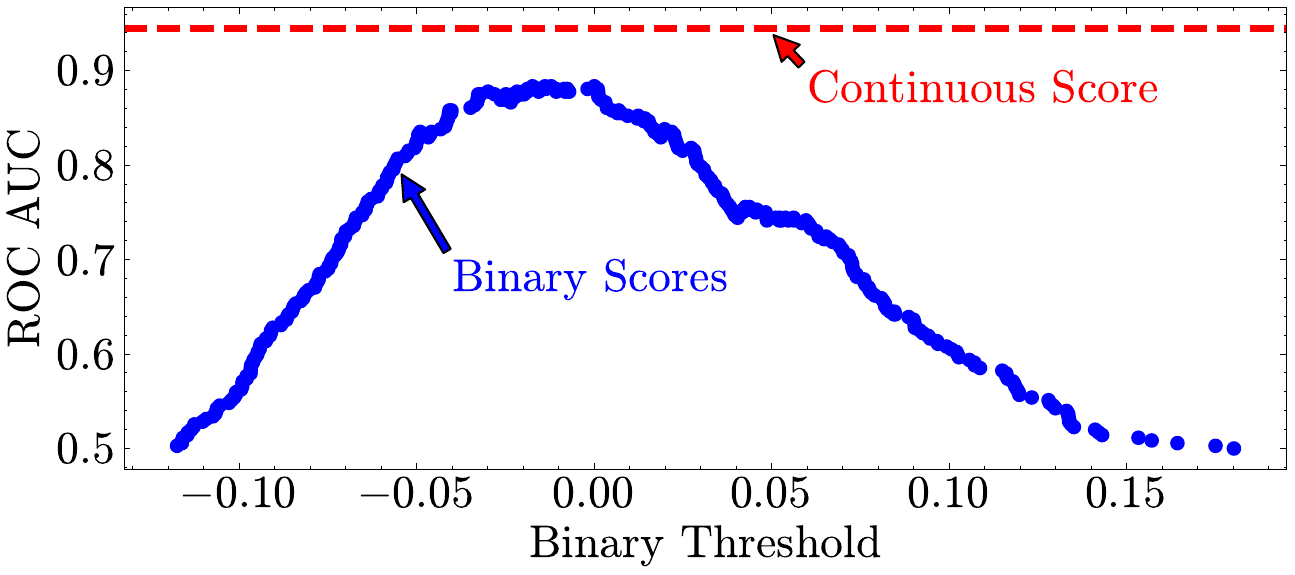}
    \caption{ROC AUC score on the cardio dataset~\cite{cardioDS} with an isolation forest. In red, you see the ROC AUC score when using continuous scores, while the blue curve shows the performance with binary scores (using every possible threshold). The continuous score is at least $6\%$ higher than any binary one, showing that we can not fairly compare continuous scores to binary ones.}    \label{fig:rocbias}
\end{figure}

\section{MNIST Classification}\label{app:mnistclass}

We further extend our classification experiments to higher dimensional MNIST data. Because their distribution is more complicated than the toy dataset in Section~\ref{sec:class}, we generally observe four outcomes for a binary classifier:
\begin{itemize}
    \item True: The sample is classified as the correct class (and only as the correct class).
    \item False: The sample is classified as the wrong class (and only as the wrong class).
    \item Outlier: The sample is neither classified as the right or wrong class.
    \item Overlap: The sample is classified as both the right and wrong class.
\end{itemize}

\begin{table}[h]
\centering
\begin{tabular}{|l|cccc|}
\toprule
\textbf{Model Name} & \textbf{True} & \textbf{False} & \textbf{Outlier} & \textbf{Overlap} \\
\midrule
Direct Model       & $0.8270$ & $0.0014$ & $0.0845$ & $0.0842$ \\
PCA (2 components) & $0.9877$ & $0.0019$ & $0.0080$ & $0.0024$ \\
Anomaly Scores     & $0.9943$ & $0.0024$ & - & $0.0033$ \\
\bottomrule
\end{tabular}
\caption{Performance of three different applications of Polyra Swarm classifiers to the binary MNIST classification task ($0$ vs $1$). We compare four different situations: Either the model correctly classifies a sample, the model incorrectly classifies a sample, the model considers a sample to be neither class or the model could see a sample as either class. Additionally, we consider three different binary classifier. The first one simply trains a Polyra Swarm for each class, the second one does the same, but after applying a pca transformation to reduce the dimensionality. The third model uses not the binary decision of whether a sample is in a class but the anomaly scores as used in Chapter~\ref{sec:ad} to consider where a sample is more likely to match. This makes outliers impossible by construction.}
\label{tab:mnistclass}
\end{table}

We show the fraction of each situation in Table~\ref{tab:mnistclass}. Simply applying polyra classifier to the raw image data is not very effective. While situations where the classifier directly classifies a test sample as wrong are very rare, both the situations $Outlier$ and $Overlap$ happen in about $8\%$ of cases. We state our hyperparameters in Table~\ref{tab:h:zerosgen}, and using various hyperparameters we can change the trade-off between $Outlier$ and $Overlap$ cases (for example using $extend$). However, using a PCA algorithm to remove every feature except the two most important principle components, we can achieve significantly better results, with all error cases much lower than $1\%$. This is visualized in Figure~\ref{fig:binarypca}.

We interpret this effect as indicator that Polyra Swarms don't work well in high-dimensional data but still have the precision to learn complicated shapes (See Chapter~\ref{sec:c:highdim}). We can further improve the performance, by considering anomaly scores (See Chapter~\ref{sec:ad}) in the case of outliers. The class that considers a sample as less anomalous gets assigned to each sample. This roughly halves the overall non-True cases, but also removes the reject option of our classifier.

\begin{figure}[htbp]
    \centering
    \includegraphics[width=0.8\linewidth]{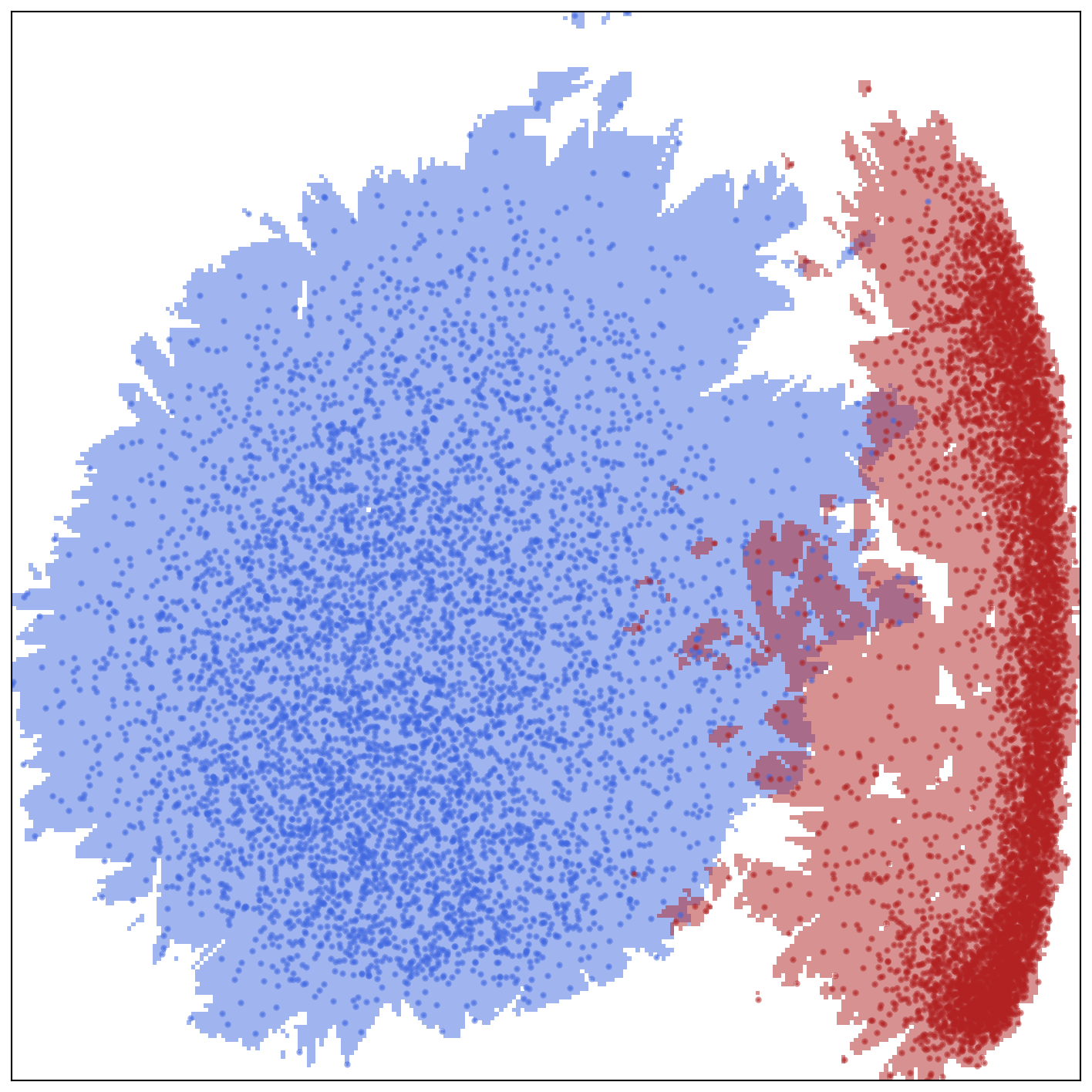}
    \caption{Distribution of binary classification task ($0$ vs $1$ on MNIST) in a two-dimensional principal component space and learned regions by a Polyra Swarm classifier. }
    \label{fig:binarypca}
\end{figure}

\begin{table}[h]
\centering
\begin{tabular}{|lr|}
\hline
\textbf{Hyperparameter} & \textbf{Value} \\
\hline
Sample count     & $5923+6742$  \\
minpoi           & $500$  \\
extend           & $0.1$  \\
PCA components   & $2$ \\
\hline
\end{tabular}
\caption{Non-default hyperparameter settings used in the mnist classification experiment.}
\label{tab:h:zerosgen}
\end{table}

\section{Abstraction Procedures}\label{app:abs}

We are interested in such algorithms that transform a Polyra Swarm into an approximately similar form while drastically reducing its complexity. Since we use a tree-based structure to represent a learned Polyra Swarm, we consider the number of nodes $C$ needed to describe it as a measure of complexity.

Generally, we differentiate between two groups of abstraction procedures. The first one consists of simple transformations that slightly reduce the complexity $C\rightarrow C^{\prime} \gtrsim \frac{C}{2}$, while not noticeably altering the prediction of the Polyra Swarm. These simplifications are generally extremely fast, and will be executed commonly to transform the Polyra Swarm tree in a uniform format. These will be presented in the following subsection~\ref{app:abs:simp}. The second, maybe more interesting group of abstraction procedures do change the learned prediction of a Polyra Swarm, but reduce the complexity significantly more $C\rightarrow C^{\prime} \ll C$. These algorithms are much more complicated and we will study multiple different algorithms with slightly different approaches and use-cases in the following Appendices (\ref{app:abs:1d}, \ref{app:abs:sample}, \ref{app:abs:lp}). All of these algorithms follow the same general structure, which will be explained in Appendix~\ref{app:abs:gen}.

\subsection{Simplification}\label{app:abs:simp}

Our simplification procedure generally consists of various transformations that will be applied to transform a Polyra Swarm in tree representation into a uniform form. We will list these transformations in this appendix to increase reproducibility, sorted by the type of leaf they are applied to.

\subsubsection{And-Leaves}
\begin{equation}
    \TRUE \land \cdots \rightarrow \cdots
\end{equation}
\begin{equation}
    \FALSE \land \cdots \rightarrow \FALSE
\end{equation}
\begin{equation}
    a \land (b\land c) \rightarrow a \land b \land c
\end{equation}
\subsubsection{Or-Leaves}
\begin{equation}
    \TRUE \lor \cdots \rightarrow \TRUE
\end{equation}
\begin{equation}
    \FALSE \lor \cdots \rightarrow \cdots
\end{equation}
\begin{equation}
    a \lor (b\lor c) \rightarrow a \lor b \lor c
\end{equation}
\subsubsection{Not-Leaves}
\begin{equation}
    \neg (\mathrm{A}\cdot x \leq b) \rightarrow -\mathrm{A}\cdot x \leq -b -\epsilon
\end{equation}
This equation allows us to remove all Not-Leaves from our Polyra Swarm Tree. $\epsilon$ represents a small constant and is chosen as $\epsilon=10^{-6}$.
\begin{equation}
    \neg (a \lor b) \rightarrow \neg a \land \neg b
\end{equation}
\begin{equation}
    \neg (a \land b) \rightarrow \neg a \lor \neg b
\end{equation}
\begin{equation}
    \neg\neg a \rightarrow a
\end{equation}

\subsection{General Abstraction}\label{app:abs:gen}

The above-described simplification procedure transforms a Polyra Swarm tree into a format that looks similar to:
\begin{equation}
    (\mathrm{Halfspace} \vee (\mathrm{Halfspace}\wedge\mathrm{Halfspace}))\wedge(\mathrm{Halfspace} \vee (\mathrm{Halfspace}\wedge\mathrm{Halfspace}))\wedge\cdots
    \label{eqn:abs:before}
\end{equation}

Still, many of these terms have very similar effects, and it is possible to reduce the amount of these terms significantly using the abstraction algorithms we describe in the following appendices. All of the abstraction algorithms that we study here follow the same pattern. We first convert the tree into a format similar to conjunctive normal form (OR Terms below a AND leaf), and transform this form into disjunctive normal form (AND terms below an OR leaf). 

These transformations rely on well-known logical transformations. Namely
\begin{equation}
    (a \vee b) \wedge (c\vee d) \rightarrow (a\wedge c) \vee (a\wedge d) \vee (b\wedge c) \vee (a\wedge d) 
\end{equation}
and
\begin{equation}
    (a \wedge b) \vee (c\wedge d) \rightarrow (a\vee c) \wedge (a\vee d) \wedge (b\vee c) \wedge (a\vee d) 
\end{equation}

The first transformation is generally fast since the initial, learned tree in Equation~\ref{eqn:abs:before} is already very similar to the desired conjunctive normal form and we thus achieve a linear runtime as a function of submodels used. However, transforming a large logical tree in conjunctive form (with $C_1$ OR leaves containing $C_2$ halfspace conditions each) into the equivalent tree in disjunctive normal form generally produces a tree containing $C_2^{C_1}$ different AND leaves. This is generally infeasible for any swarm with more than a few submodels. However, if it would be possible, many of these AND leaves are trivial (e.g. $a \wedge \neg a$) or duplicates of each other (e.g. $a \vee a$), possibly resulting in less complexity than the original swarm tree.

To circumvent this, we apply our abstraction procedure iteratively.

Instead of 
\begin{equation}
    q=\mathrm{ABSTRACT}(\mathrm{DNF}(t_1 \land t_2 \land \cdots \land t_N))
    \label{eqn:abs:before}
\end{equation}
We apply

\begin{equation}
    q_i=\mathrm{ABSTRACT}(\mathrm{DNF}(q_{i-1},t_i)), \; q_1=t_1, q=q_N
    \label{eqn:abs:after}
\end{equation}

This removes the exponential time cost of Equation~\ref{eqn:abs:before} when the ABSTRACT procedure is powerful enough to guarantee that $C(q_i)\leq C_{max}$ regardless of the complexity of $C(\bigwedge t_i)$. Instead, we have an exponential dependency on ($C_{max}$) since we need to transform an equation containing up to $C_{max}$ terms into a normal disjunctive form for each initial AND leaf. This exponential dependency can be surprisingly useful, since it means that the speed of the algorithm is already an indicator for the resulting complexity reduction of the abstraction. We generally want to build our abstraction procedures and hyperparameters so that $C(q_i)$ converges:

\begin{equation}
    \lim_{i\rightarrow\infty} C(q_i)=C(q) < \infty
    \label{eqn:abs:condition}
\end{equation}

We will now study various ABSTRACT procedures in the following Appendices.

\section{One-dimensional Abstraction: The Rangefinder Algorithm}\label{app:abs:1d}

We begin by assuming that our input space is one-dimensional. This simplifies creating an abstraction algorithm immensely, as the largest difficulty in creating an abstraction algorithm is to make sure that $C(q_i)\leq C_{max}$ is fulfilled, and more submodels do not require a more complicated description.

In one dimension, every halfspace condition can be represented as either $x\ge a$ or $x\le b$. Thus, every possible shape can be represented as $\bigvee_i (a_i\leq x \leq b_i)$.
Further, we can always find a representation, where there is no overlap between conditions ($a_{i+1}>b_i\;\forall i$), since when two conditions overlap, we can combine them together.
\begin{equation}
    (a\leq x \leq b) \vee (c\leq x \leq d) \textrm{ ,where } (c\le b \land a\le d) = a\leq x \leq d
\end{equation}

Thus, the resulting maximum complexity is bounded by $C_{max}=a\cdot (1+\mathrm{Holes)}$ with a small constant $a$ and the number of holes learned (Holes), guaranteeing effective abstraction.

In practice, $\textrm{ABSTRACT}_\textrm{1d}$ consists of two steps. We first consider each OR leaf and transform it into $a\leq x \lor x\leq b$ by considering the most extreme cases ($a\le x \lor b\le x,\;a<b\rightarrow a\le x$). If $b<a$, we remove the condition and replace it with $\TRUE$. Afterward, we follow the iterative conversion procedure converting to disjunctive normal form and apply the equivalent transformation in each AND leaf ($a\le x \land b\le x,\;a<b\rightarrow b\le x$).

This abstraction procedure is significantly faster than the algorithms studied in the following appendices and, in comparison to the following algorithms, does not need to alter the prediction of a Polyra Swarm and gives us a theoretical solution on the abstraction benefits. However, it only works on one-dimensional swarms. Its current use case in this paper is somewhat hidden. We use it to power the generative model (Appendix~\ref{app:generative}) and for precise range predictions (Sections~\ref{sec:reg} and \ref{sec:match}).

\section{N-dimensional Abstraction: Sampling Abstraction}\label{app:abs:sample}

In more than one dimension, we are no longer able to reduce every AND or OR leaf into such a simple uniform representation. And while there are still many simplifications that can be made, to guarantee that $C(q_i)$ is bounded we need to change the predictions of our Model (slightly). As we show in the main paper, this is often beneficial since it counters the fragmentation of our swarm and thus reduces overfitting effects. These benefits of reduction are a well-studied effect in machine learning and philosophy~\cite{occam1,occam2,occam3}. 

Our procedure $\textrm{ABSTRACT}_\textrm{sample}$ consists of three steps.

\subsection{Redundant ANDs}\label{app:abs:sample:AND}
This part considers each AND-like leaf, and thus terms of the form $p=h^1 \land h^2 \land h^3\land \cdots$, where $h^i$ are halfspace conditions. We first check if $\exists x\in \mathbb{R}^d$ so that $p(x)=\TRUE$. If not, we can discard the leaf ($p=\FALSE$). 
If we find a single sample that makes this AND-like leaf nontrivial, we consider removing single halfspace conditions and check whether $\exists x\in \mathbb{R}^d$ so that $ (h^2\land h^3\land \cdots)(x)=\TRUE$ but not $h^1(x)=\FALSE$. If none such a sample exists, the condition $h^1$ provides no benefit and is removed.
We iterate until there are no more conditions to be removed.

Our implementation here requires random samples to check whether there exists a sample that is fulfilling certain conditions. This dependency on random samples incurs an effectively exponential dependency on the dimensionality~\cite{curseOdim}, and thus, sample abstraction can only be applied to low-dimensional tasks. We consider a better scaling version of this algorithm in Appendix~\ref{app:abs:lp}.

\subsection{Redundant ORs}\label{app:abs:sample:OR}
Next, we consider the OR-like leaf, and thus terms of the form $s=p_1 \lor p_2 \lor p_3 \lor \cdots$. Instead of random samples, we consider the set of points we train on $X$. We consider each child term and check whether $(p_2\lor p_3\lor \cdots)(x)=\TRUE \; \forall x\in X$. If this is the case, the term $p_1$ does not provide any benefit and can be removed.

\subsection{Puzzling}\label{app:abs:sample:PUZZ}
Finally, we will consider the top-level OR-like leaf again and see if we can merge different terms $p_i$ together. Consider for example:
\begin{equation}
    s=(p_1 \land p_2)\lor (p_1\land \neg p_2) =p_1
    \label{eqn:abs:example}
\end{equation}
For this, we consider for each set of two $p_i=h_i^1 \land h_i^2 \land h_i^3 \land \cdots$ if there is a description that matches both. For this, we exploit that when merging $p_a$ and $p_b$, the resulting polytope $p_m$ needs to contain only halfspaces that do not contradict either $h_a^j$ or $h_b^j$. Thus, our candidate for the halfspaces describing $p_m$ are those of $p_a$ that do not contradict $p_b$ and those of $p_b$ that do not contradict $p_a$. This procedure creates a polytope encompassing both initial polytopes and already solves the example in Equation~\ref{eqn:abs:example}. However, it also suggests a hull when the polytopes do not match. So, we need to check whether the resulting volume is larger than the volume of the initial polytopes $\mathrm{Vol}(p_m)>Vol(p_a\lor p_b)$ and reject the combination in this case. To calculate the volume, we again use random samples. We iterate this procedure again until there are no more polytopes to merge.

So far, no step in the algorithm has altered the prediction of the Polyra Swarm (if enough samples are used for the sampling-based decisions). However, in our experiments, running this algorithm violates the condition that $C(q_i)$ is bounded, as the abstraction algorithm requires more and more time in each step. To solve this, we allow the algorithm to approximate slightly by introducing a hyperparameter $\delta V$. Using this hyperparameter, we allow the merge of two shapes to be slightly bigger than the initial volume $\frac{\mathrm{Vol}(p_m)}{Vol(p_a\lor p_b)}\le 1+\delta V$. This allows the abstraction algorithm to remove small regions inside the shape that do not match the expectation, thus preventing fragmentation effects. In our experiments, setting $\delta V=5\%$ seems to work well.

\section{Abstraction through Linear Programming}\label{app:abs:lp}
Many of the decisions of the abstraction algorithm introduced in Appendix~\ref{app:abs:sample} depend on random samples and thus generally might not scale well to high-dimensional tasks. However, we find that large parts of the abstraction procedure can be replaced by linear programming~\cite{linearprogramming}.

Linear programming is a mathematical method for optimizing a linear objective function, subject to a set of linear equality and inequality constraints. Most of the sampling steps in Appendix~\ref{app:abs:sample} can be formulated into such a linear program checking whether a set of linear inequality constraints has a solution. We will discuss here quickly how to do so aswell as the limitations of this approach.

\subsection{Trivial Subterms}
The first step in Section~\ref{app:abs:sample:AND}, checks whether one polytope has a single solution. This can be directly expressed as a linear program, with the halfspace conditions providing the constraints.

\subsection{Redundant AND-Terms}
Similarly, our check in Section~\ref{app:abs:sample:AND} for redundant terms iterates over checks that can be expressed as a linear program. Here $h^2\cdots h^N$ represent constraints, and by inverting $\neg h^1$ we can also consider this as a further halfspace condition. Notice that the last step requires carefully choosing an $\epsilon>0$, as linear programs generally assume non-strict constraints. This is less important when using sampling, as it is very unlikely that one equation is randomly exactly fulfilled.

\subsection{Redundant OR-Terms}
Our method in Section~\ref{app:abs:sample:OR} relies not on random samples but on the training data. Since the amount of samples does not increase with the dimension, we will keep it as is.

\subsection{Puzzling}
The difficult part of our linear programming-based approach is the approximation. While all decisions before have been of the type $\exists x \textrm{ so that }\cdots$, our condition in Section~\ref{app:abs:sample:PUZZ} checks whether there are too many samples $x$ that are both not in $p_1$ and $p_2$ but are in $s$. We were not able to solve this using linear programming. Instead, we have to rely on sampling again. Still, we can do better than simply drawing samples from a uniform space. For this, consider that every point of interest should lie in $s$ and that we accept a merged description when this polytope contains at most a few samples that are not in $p_1$ and $p_2$. Thus, we can rely on random samples generated from the known polytope $s$. This is a well-studied research area~\cite{polytopesampling} with generally less than exponential time cost. Now we just have to check whether these samples are in $p_1$ or $p_2$.

\subsection{Time complexity}
We use sampling-based abstraction in this paper since it is generally more efficient for the two-dimensional case, and we cannot visualize higher-dimensional shapes well. However, because sampling random points generally might require exponentially many samples in the dimension $dim$, this approach is less valuable for high-dimensional tasks. In our experiments, the limit seems to be roughly $dim=6$ on a consumer-grade laptop in a reasonable time.

To fix this, we suggest using linear programming. The runtime of this depends on the runtime of the algorithm used to solve linear programs, with some linear programs having time complexity $O((dim+k)^{3.5})$\cite{interiorPointMethodsLinearOptimization}, where $k$ is the number of constraints. Additionally, the number of such checks depends on the hyperparameters chosen but is also generally polynomial in $dim$, resulting in a polynomial runtime.

Still, even our linear programming abstraction algorithm is very likely also not the best possible abstraction algorithm and we look forward to the results of further research.


\section{Clustering through Abstraction}\label{app:clustering}

When considering Figure~\ref{fig:rw} in the main paper, our abstraction process successfully finds the two well-known clusters in the Old-Faithful dataset~\cite{oldfaithful}. Here, we want to explore further whether we can use abstraction to find clusters in a dataset. This would be interesting since when we interpret our abstraction algorithm in this way, it generates very efficient, small, human-readable descriptions for each cluster. One large restriction will be that each abstracted Polyra Swarm has to be a set of polytopes connected by OR leaves, limiting the shape of a cluster to be convex.

We test this on a toy dataset in Figure~\ref{fig:cluster}. We generate data ($10k$ samples) in three uniform regions, train a $1000$ submodel Polyra Swarm with default parameters, and abstract it with the default $\delta V=5\%$. This generates the abstract description shown in Equation~\ref{eqn:cluster:res}. We show the three polytopes found in different colors in Figure~\ref{fig:cluster}, and these match very well with the initial squares (except for a small deviation in Cluster 3). This proves that there is potential in using polyra abstraction for clustering and in a more holistic view of machine learning models and we look forward to considering it further in future publications.

Please notice that the description of the clusters is still less than optimal. Six conditions describe the first and second clusters, and seven conditions describe the third cluster. As four conditions would be optimal for each cluster, this implies that the description could be abstracted further.

\begin{figure}[H]
    \centering
    \includegraphics[width=0.6\linewidth]{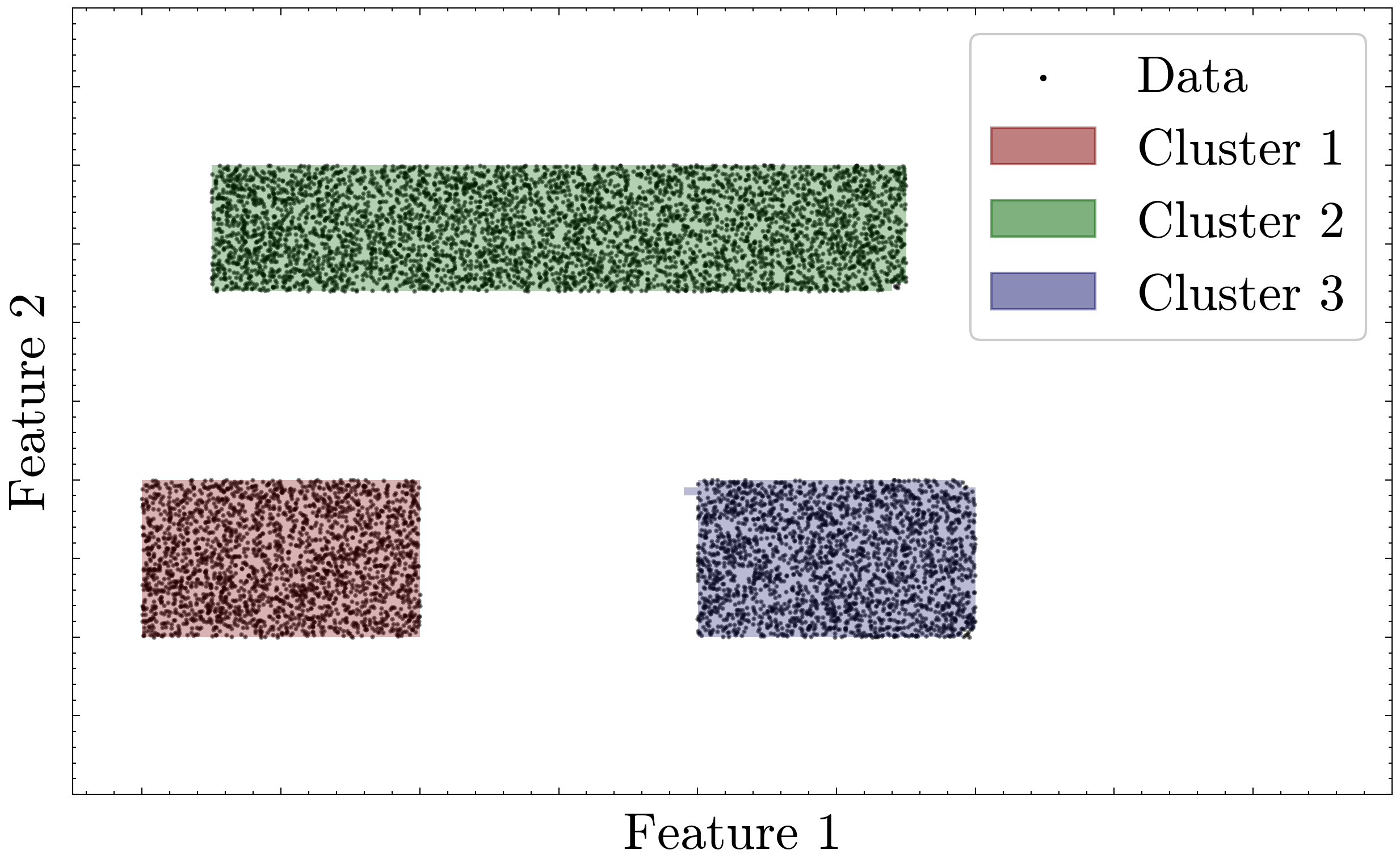}
    \caption{Toy experiment to show that abstraction can be used for clustering. We generate data following one of three square regions and fit a default hyperparameter Polyra Swarm to it. The abstraction of this swarm model is the disjunction of three polytopes (Equation~\ref{eqn:cluster:res}), which very well represent the three square regions.}
    \label{fig:cluster}
\end{figure}

\begin{equation}
\begin{aligned}
    &\left( \begin{bmatrix}
    -0.005 & -1.0\\ 
    -1.0 & 0.0054\\ 
    -0.0159 & 0.9999\\ 
    -0.9603 & 0.2789\\ 
    -1.0 & -0.0067\\ 
    1.0 & 0.0059
    \end{bmatrix}
    \cdot x \leq 
    \begin{bmatrix}
    -0.2011\\ 
    -0.198\\ 
    0.3962\\ 
    -0.0839\\ 
    -0.2018\\ 
    0.4015
    \end{bmatrix} \right)\\
\bigvee\quad 
    &\left( \begin{bmatrix}
    -0.007 & -1.0\\ 
    -1.0 & 0.0012\\ 
    -0.0099 & 1.0\\ 
    1.0 & -0.0004\\ 
    0.0099 & -1.0\\ 
    0.8172 & -0.5763
    \end{bmatrix}
    \cdot x \leq 
    \begin{bmatrix}
    -0.642\\ 
    -0.2493\\ 
    0.7968\\ 
    0.7497\\ 
    -0.6334\\ 
    0.2368
    \end{bmatrix} \right)\\
\bigvee\quad 
    &\left( \begin{bmatrix}
    -0.9997 & -0.0236\\ 
    -0.8234 & 0.5675\\ 
    0.0107 & 0.9999\\ 
    0.998 & 0.0624\\ 
    -0.0075 & -1.0\\ 
    0.0006 & -1.0\\ 
    1.0 & 0.0067\\ 
    0.9631 & -0.2691
    \end{bmatrix}
    \cdot x \leq 
    \begin{bmatrix}
    -0.6048\\ 
    -0.2704\\ 
    0.408\\ 
    0.8208\\ 
    -0.2051\\ 
    -0.1996\\ 
    0.802\\ 
    0.7124
    \end{bmatrix} \right)
\end{aligned}
\label{eqn:cluster:res}
\end{equation}

\section{Generative Polyra Swarm}\label{app:generative}
The Hit-and-Run algorithm~\cite{hitandrun} is a method to generate samples uniformly from a convex shape. Starting with a point in the shape, it picks a direction at random. A line through this point along the chosen direction intersects with the border of the shape exactly twice (as the shape is convex), and we pick a random point between these intersections. The process is iterated starting from this new point until the desired amount of samples is generated.

To generalize this behavior to arbitrary shapes, means to consider the case where the number of intersections is larger than two. We have already created an algorithm to find the intersection points along a direction in Appendix~\ref{app:abs:1d}. Additionally, we weigh the likelihood of a point in each linear segment by the inverse of its length to achieve somewhat uniformly distributed samples.

\subsection{Generating Images}\label{app:zerogen}
Similar to the classification case, we only show the results of our generative model for a toy dataset in the main paper. Here, we want to consider the more realistic case by considering MNIST~\cite{mnist} data; namely, we want to generate images of the digit zero. 

When training on raw pixel data with $28²=784$ features, we see that every generated image is exactly the same (or very close) as the initial image chosen. This shows a major fault of our generative algorithm. When the learned shape contains many disconnected parts, we need to randomly select a direction that intersects with one of these disconnected islands. When the volume of the learned shape is small and the dimension of the data is high, this is extremely unlikely and thus we never leave the initial island, resulting in very similar generated samples.

To circumvent this, we again rely on a PCA algorithm to reduce the dimensionality of our feature space. Using this (and the hyperparameters shown in Table~\ref{tab:h:zerosgen}), we can generate somewhat realistically looking zeros in Figure~\ref{fig:pcagen}. While these are not optimal, with some versions looking more like intersecting rings instead of the digit zero, they are also all new, unique samples. Further it is likely possible to improve this further by using a better feature representation algorithm (like an autoencoder~\cite{aeusedimensionality}).

\begin{figure}[H]
    \centering
    \includegraphics[width=0.6\linewidth]{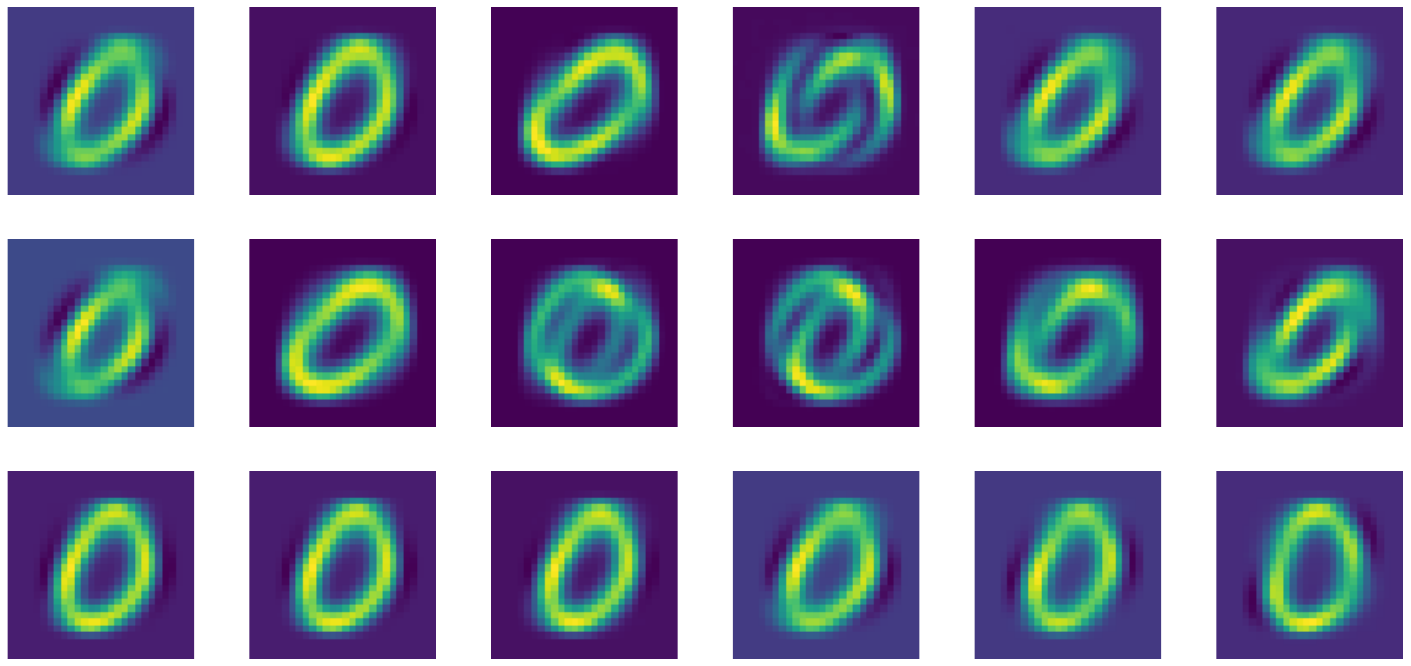}
    \caption{Images of MNIST zeros generated by a Polyra Swarm. Because both Polyra Swarms and the generative algorithm don't work well on high-dimensional data, we use here a PCA algorithm to reduce the dimensionality of the data. The results would likely look even better when using a deep learning-based feature extractor like an autoencoder~\cite{aeusedimensionality}. The hyperparameters of this experiment can be found in Table~\ref{tab:h:zerosgen}.}
    \label{fig:pcagen}
\end{figure}

\begin{table}[h]
\centering
\begin{tabular}{|lr|}
\hline
\textbf{Hyperparameter} & \textbf{Value} \\
\hline
Sample count     & $5923$  \\
minpoi           & $1000$  \\
PCA components           & $3$  \\

\hline
\end{tabular}
\caption{Non-default hyperparameter settings used in the image generation experiment.}
\label{tab:h:zerosgen}
\end{table}

\section{Polyra-based Optimization}\label{app:opt}
While Polyra Swarms are designed to not require optimization, this does not mean that we can't use them to find solutions to optimization problems. However, like with most of the applications in Section~\ref{sec:use}, it requires rethinking how we approach these tasks. We show this with Kepler's equation (Equation~\ref{eqn:kepler}). While this equation is technically simple, finding an inverse solution $E(M)$ is still part of active research~\cite{keplersequation} and is most easily done by numerical methods minimizing Equation~\ref{eqn:kepler:loss}.
\begin{equation}
    M(E)=E-e\sin(E)\hspace{3em} e=1.0
    \label{eqn:kepler}
\end{equation}
\begin{equation}
    L=\|M(E)-(E-e\sin(E))\|
    \label{eqn:kepler:loss}
\end{equation}

We can do something similar using a Polyra Swarm, by searching for the shape created by Equation~\ref{eqn:kepler:loss} between $L$,$M$ and $E$. By afterward demanding that $L=0$, we can find a shape representing the relationship between $M$ and $E$. This resulting shape (assuming the hyperparameters given in Table~\ref{eqn:kepler:loss}) is shown in Figure~\ref{fig:kepler}. While this is by no means the fastest (or most accurate) way of inverting Equation~\ref{eqn:kepler}, it still shows the benefit of having ensembles of small submodels that are easy to manipulate. And instead of finding a singular solution we effectively find almost all solutions at the same time.

\begin{figure}[H]
    \centering
    \includegraphics[width=0.6\linewidth]{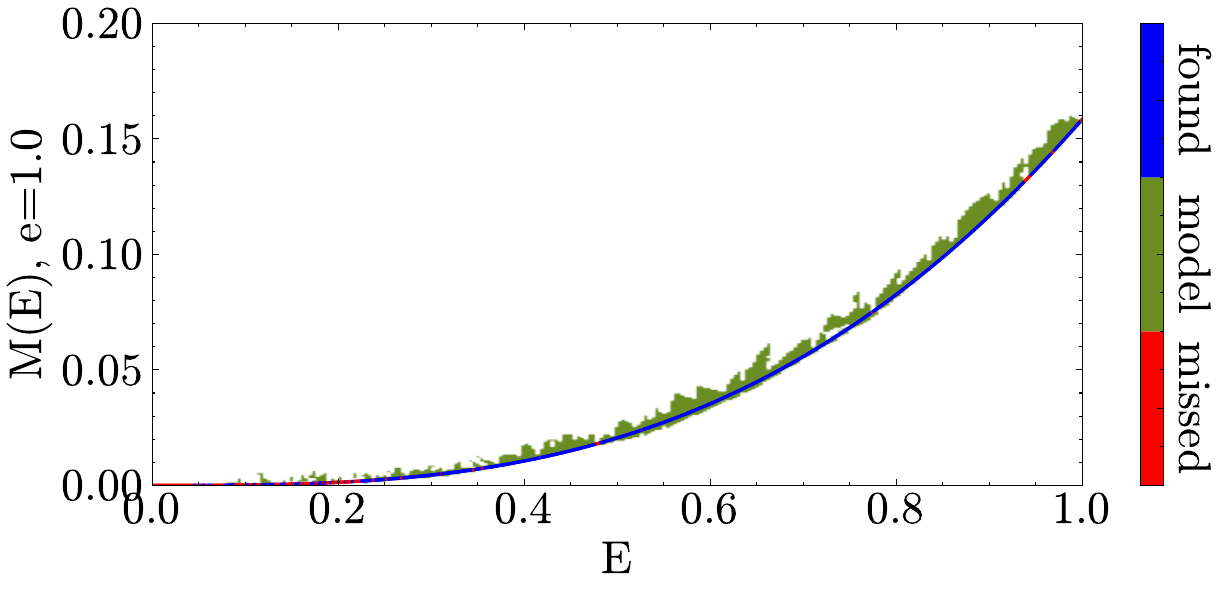}
    \caption{Example of how to use a Polyra Swarm to find solutions to Equation~\ref{eqn:kepler}. We show the region that the swarm believes to be part of the solution in green, and the actual solution in either blue or red, depending on whether the swarm believes this equation to be valid.}
    \label{fig:kepler}
\end{figure}

\begin{table}[h]
\centering
\begin{tabular}{|lr|}
\hline
\textbf{Hyperparameter} & \textbf{Value} \\
\hline
Sample count     & $100000$  \\
minpoi           & $1000$  \\
extend           & $0.015$  \\

\hline
\end{tabular}
\caption{Non-default hyperparameter settings used in the optimization experiment.}
\label{tab:h:kepler}
\end{table}

\section{Setup - Mandelbrot fit}\label{app:mandelbrot}

To achieve the highly accurate fit shown in Figure~\ref{fig:eye}, the largest challenges stem from the very large dataset we use. We generate a dataset from points that lie in the mandelbrot set\footnote{Technically, the mandelbrot set has a boundary of infinite length and thus does not fulfill our Definition~\ref{def:shape} to qualify as a shape. Thus we more accurately try to approximate the shape of the set describing a given finite resolution of samples following the mandelbrot set.} (don't violate $\|Z\|\le2$ in $100$ steps). We generate $50000$ values in each direction (Thus $50,000^2=2,500,000,000$ samples in total) from $-2\le x\le1.5$ and $-2\le y\le2$. All points are used as a test set, while only the points that seem to lie in the mandelbrot set are used for training the Polyra Swarm(roughly $10\%$). This setup makes false negatives impossible since no training sample can lie considered outside of the shape(Appendix~\ref{app:alltrueproof}). We chose this here to show the low bias of our model and also because repeating this experiment would have been computationally infeasible. In fact, to train this model, we heavily parallelize our computation, by noticing that we can combine the output of two Polyra Swarms as $p=\min(p_1,p_2)$. To focus on the small details of the mandelbrot set, we choose a higher $Adim=6$. The full hyperparameters can be found in Table~\ref{tab:h:mandelbrot}. We train $40000$ chunks of $25$ submodels each. Each of our submodels takes about $40min$ because of the extreme number of training samples (over $250$ million). In total this results in a training time of approximately three years of CPU time.

 \begin{table}[h]
\centering
\begin{tabular}{|lr|}
\hline
\textbf{Hyperparameter} & \textbf{Value} \\
\hline
Sample count     & $276147856$  \\
Model count       & $1000000$   \\
Minpoi      & $10$   \\
Adim      & $6$   \\
\hline
\end{tabular}
\caption{Non-default hyperparameter settings used in the mandelbrot experiment.}
\label{tab:h:mandelbrot}
\end{table}

\section{Setup - Matching Algorithm}\label{app:match}
As an example of how to use shape approximation for a task that is classically solved by function approximation, we consider matching tasks (e.g. a dating app algorithm, matching advertisement to the users who are the most likely to engage with it, ...). As an example we consider a situation where both participants to be matched are described by singular features. This means a successful match is described by a point in two-dimensional space. Thus, we search for the shape describing this region to see what potential successful matches could look like. Our hyperparameters for this can be found in Table~\ref{tab:h:match}.

Furthermore, we can input known information of one participant to reduce the dimensionality of the search space. Using the abstraction procedure described in Appendix~\ref{app:abs}, we can simplify its description to create a personal description of how a successful match would look like for a given participant.

\begin{table}[h]
\centering
\begin{tabular}{|lr|}
\hline
\textbf{Hyperparameter} & \textbf{Value} \\
\hline
Sample count     & $10000$  \\
Model count       & $5000$   \\
Minpoi      & $2500$   \\
Extend      & $0.035$   \\
\hline
\end{tabular}
\caption{Non-default hyperparameter settings used in the matching algorithm experiment.}
\label{tab:h:match}
\end{table}

\section{Setup - Classification}\label{app:classification}
We compare our classification setup qualitatively on a common toy dataset. For this, we use Polyra Swarms (which hyperparameters can be found in Table~\ref{tab:h:class}) and a neural network (Whose structure can be found in Table~\ref{tab:h:class:nn:setup} and whose hyperparameters can be found in Table~\ref{tab:h:class:nn}). We further include a reject option for the neural network, where uncertain values with probabilities between $0.05-0.95$ are rejected.

To use Polyra Swarms for classification, we use one Polyra Swarm for each class. This produces a function that measures association for each class. If there is exactly one class that a sample seems to associate with, we classify the sample as this class. However it is possible that more or fewer classes consider a sample to match this group. This situation and an example of classification for real-world datasets are discussed in Appendix~\ref{app:mnistclass}.

\begin{table}[h]
\centering
\begin{tabular}{|lr|}
\hline
\textbf{Hyperparameter} & \textbf{Value} \\
\hline
Sample count     & $2500\times2$  \\
Model count       & $1000\times2$   \\
Minpoi      & $500$   \\
Extend      & $0.1$   \\
\hline
\end{tabular}
\caption{Non-default hyperparameter settings used for polyra in the classification experiment.}
\label{tab:h:class}
\end{table}

\begin{table}[h]
\centering
\begin{tabular}{|lr|}
\hline
\textbf{Hyperparameter} & \textbf{Value} \\
\hline
Epochs     & $100$  \\
Batchsize       & $32$   \\
Patience      & $5$   \\
Learning rate      & $0.001$   \\
Optimizer      & Adam   \\
\hline
\end{tabular}
\caption{Hyperparameter settings used for the neural network in the classification experiment.}
\label{tab:h:class:nn}
\end{table}
\begin{table}[h]
\centering
\begin{tabular}{|lll|}
\hline
\textbf{Layer} & \textbf{Type}           & \textbf{Details} \\
\hline
1   & Input            & shape = 2 \\
2-4 & Dense (3x)       & 64 units, ReLU activation \\
5   & Dense (Output)   & 2 units, Softmax activation \\
\hline
\end{tabular}
\caption{Neural network architecture used in the classification experiment.}
\label{tab:h:class:nn:setup}
\end{table}

\section{Setup - Anomaly Detection}\label{app:anomaly}

For our experiment on using Polyra Swarms for anomaly detection, we follow a recent anomaly detection benchmarking paper~\cite{surveyzhao}. Following this, we use $121$ common anomaly detection benchmarking datasets, extracted from various situations. Thus, while every dataset is converted into a tabular format, some contain medical data and some image or text-based data. We choose as our competitors recent deep learning anomaly detection algorithms. These include NeuTral~\cite{NeuTralAD}, Diffusion Time Estimation (DTE)~\cite{dte}, GOAD~\cite{goad}, DAGMM~\cite{dagmm}, DeepSVDD~\cite{deepsvdd} and Normalizing Flow~\cite{nf}. We use the implementation and hyperparameters of either pyod~\cite{pyod} or the original papers. We further consider each algorithm in the one-class setting, where only access to normal samples is given.

Anomaly detection is usually evaluated using continuous anomaly scores and not using the binary decisions (normal or not normal) Polyra Swarms generate. Because continuous scores generally result in a higher anomaly score (See Appendix~\ref{app:rocbias}), we need to propose a way of generating anomaly scores from a Polyra Swarm. These are less than optimal since, e.g., abstraction generally changes these scores. However, it is also rare that anomaly detection requires continuous scores in practice. Generally, anomaly detection algorithms also propose continuous scores to allow for varying thresholds, but it is also possible to control these thresholds through hyperparameters.

We propose here two ways of generating continuous anomaly scores:

\begin{equation}
    \textrm{score}_1(x)=\textrm{mean}_{i} (f_i(x))
    \label{eqn:anoscore1}
\end{equation}
\begin{equation}
    \textrm{score}_2(x)=\frac{\textrm{mean}_{i} (x\in (A_i\cap B_i))}{\textrm{mean}_{i} (x\in A_i)}
    \label{eqn:anoscore2}
\end{equation}

Additionally, since we now only calculate fractions instead of the most extreme decision by every model, the dependency of the volume on the number of submodels (as shown in Appendix~\ref{app:hyper:ensemblesize}) does no longer apply. Instead using larger ensembles only reduces random noise and thus increases average performance. This is shown in Table~\ref{tab:adperf}. In fact, when using even more submodels than the $400000$ used here, the anomaly detection performance would likely still improve. However, since these these experiments take around $1h$ of average computation time per dataset (for a total time of roughly five days), we did not test this further. 

It seems like the second anomaly score (Equation~\ref{eqn:anoscore2}), which calculates the number of submodels where the submodel is fulfilled divided by the number of submodels where the sample is included in the condition polytope, performs better. We use the largest ensemble with the second anomaly score in Figure~\ref{fig:better}. However, the difference to the simpler to calculate first anomaly score (Equation~\ref{eqn:anoscore1}) is small for large ensemble sizes. Also, it is likely that by varying hyperparameters, we could increase the performance further (Table~\ref{tab:h:anomaly}).

\begin{table}[h]
\centering
\begin{tabular}{|r|r|r|}
\hline
\textbf{Size} & \textbf{Performance (1)} & \textbf{Performance (2)} \\
\hline
1000   &  $0.7253$  &  $0.7395$  \\
10000  &  $0.7641$  &  $0.7683$  \\
100000 &  $0.7814$  &  $0.7839$  \\
400000 &  $0.7842$  &  $\textbf{0.7887}$  \\
\hline
\end{tabular}
\caption{Average anomaly detection performance with different ensemble sizes and anomaly score functions.}
\label{tab:adperf}
\end{table}

\begin{table}[h]
\centering
\begin{tabular}{|lr|}
\hline
\textbf{Hyperparameter} & \textbf{Value} \\
\hline
Sample count     & variable  \\
Model count      & up to $400000$   \\
\hline
\end{tabular}
\caption{Non-default hyperparameter settings used in the anomaly detection experiment.}
\label{tab:h:anomaly}
\end{table}

\section{Setup - Regression}\label{app:regression}
We approximate the shape spanned by the function $\sin(x)$ and an uncertainty of $5\%$ using the hyperparameters shown in Table~\ref{tab:h:regression}. Using this shape, we find the region inside the shape for a given input value (here $x=\pi$). The ground truth is given by $\sin(\pi)=0$. The learned region ranging from $-0.070$ and $0.054$ also represents the $5\%$ uncertainty well. We find these (exact) limits with the one-dimensional abstraction described in Appendix~\ref{app:abs:1d}.

\begin{table}[h]
\centering
\begin{tabular}{|lr|}
\hline
\textbf{Hyperparameter} & \textbf{Value} \\
\hline
Sample count     & $10000$  \\
Minpoi      & $1000$   \\
Extend      & $0.0125$   \\
\hline
\end{tabular}
\caption{Non-default hyperparameter settings used in the regression experiment.}
\label{tab:h:regression}
\end{table}

\section{Setup - Turkish Flag}\label{app:turk}
We describe the algorithm generating random points in Appendix~\ref{app:generative}. Here, we test the distribution of points by first learning a shape representing the white part of the Turkish flag (Using the hyperparameters in Table~\ref{tab:h:turk}) and then generating points in this shape. As Figure~\ref{fig:turk} shows, these points match quite well to the original flag.
\begin{table}[h]
\centering
\begin{tabular}{|lr|}
\hline
\textbf{Hyperparameter} & \textbf{Value} \\
\hline
Sample count     & $27052$  \\
Model count      & $5000$   \\
extend           & $0.025$  \\
minpoi           & $1000$  \\
Generated Samples           & $500$  \\

\hline
\end{tabular}
\caption{Non-default hyperparameter settings used in the Turkish flag experiment.}
\label{tab:h:turk}
\end{table}


\section{Setup - Diamond Abstraction}\label{app:diamond}

To show the effect of abstraction, we generate samples uniformly distributed in a $45°$ rotated rectangle. We fit a Polyra Swarm with the hyperparameters in Table~\ref{tab:h:diamond} to this data and abstract it with sample abstraction (Appendix~\ref{app:abs:sample}) using default parameters.

\begin{table}[h]
\centering
\begin{tabular}{|lr|}
\hline
\textbf{Hyperparameter} & \textbf{Value} \\
\hline
Sample count     & $10000$  \\
Model count      & $2000$   \\

\hline
\end{tabular}
\caption{Non-default hyperparameter settings used in the diamond abstraction experiment.}
\label{tab:h:diamond}
\end{table}

\section{Setup - Old Faithful Abstraction}\label{app:faith}

To further show the benefits of abstraction, we use the OldFaithful dataset~\cite{oldfaithful}. This dataset contains a link between the time between eruptions and the length of the following eruption for the Old Faithful Geyser in Wyoming. We fit a Polyra Swarm with the hyperparameters in Table~\ref{tab:h:faith} to this data and abstract it with sample abstraction (Appendix~\ref{app:abs:sample}) using default parameters.

\begin{table}[h]
\centering
\begin{tabular}{|lr|}
\hline
\textbf{Hyperparameter} & \textbf{Value} \\
\hline
Sample count     & $272$  \\
Model count      & $100$   \\
extend           & $0.1$  \\
minpoi           & $100$  \\

\hline
\end{tabular}
\caption{Non-default hyperparameter settings used in the old faithful experiment.}
\label{tab:h:faith}
\end{table}

\section{Setup - Text Approximation}\label{app:textapprox}

Finally, to show the precision of a Polyra Swarm, we generate data following the word "Polyra". We want to decide if a two-dimensional point is inside the region covered by this text (There would be ink at this point when writing the word "Polyra" on Paper). We do this first using only samples inside the text and a Polyra Swarm with the hyperparameters in Table~\ref{tab:h:text}. Additionally, we do the same with both samples inside and outside of the text using a neural network with the setup described in Table~\ref{tab:h:text:nn:setup} and the hyperparameters described in Table~\ref{tab:h:text:nn}. For the neural network fit, we try to simplify the task as much as possible: First, we use both negative and positive samples to not have to rely on one-class classification algorithms, which tend to perform worse than algorithms with more explicit supervision~\cite{survey-ruff}. Additionally, we use the same data for both training and testing in both cases, which makes overfitting impossible to detect. However still, we observe that the neural network regression understands the structure of the text significantly worse compared to the Polyra Swarm. We interpret this as the neural network optimization reaching a local minimum and a benefit of the fact that Polyra Swarms do not require optimization and thus can not get stuck in local minima.

\begin{table}[h]
\centering
\begin{tabular}{|lr|}
\hline
\textbf{Hyperparameter} & \textbf{Value} \\
\hline
Sample count     & $59240$  \\
Model count      & $10000$   \\
extend           & $0.01$  \\
minpoi           & $500$  \\

\hline
\end{tabular}
\caption{Non-default hyperparameter settings used for polyra in the text approximation experiment.}
\label{tab:h:text}
\end{table}

\begin{table}[h]
\centering
\begin{tabular}{|lr|}
\hline
\textbf{Hyperparameter} & \textbf{Value} \\
\hline
Sample count & $280053$ \\
Epochs     & $100$  \\
Batchsize       & $32$   \\
Patience      & $5$   \\
Learning rate      & $0.001$   \\
Optimizer      & Adam   \\
\hline
\end{tabular}
\caption{Hyperparameter settings used for the neural network in the text approximation experiment.}
\label{tab:h:text:nn}
\end{table}
\begin{table}[H]
\centering
\begin{tabular}{|lll|}
\hline
\textbf{Layer} & \textbf{Type}           & \textbf{Details} \\
\hline
1   & Input            & shape = 2 \\
2-4 & Dense (3x)       & 64 units, ReLU activation \\
5   & Dense (Output)   & 2 units, Softmax activation \\
\hline
\end{tabular}
\caption{Neural network architecture used in the text approximation experiment.}
\label{tab:h:text:nn:setup}
\end{table}

\section{Hardware setup}\label{app:hardware}
For every experiment, except the mandelbrot fit (Figure~\ref{fig:eye}) and the anomaly detection performance (Figure~\ref{fig:better}), we use only a consumer-grade laptop to conduct our experiments. We use a Thinkpad T14 with an Intel i7-1260p (16 cores) and $32gb$ RAM and every experiment takes less than $10\textrm{min}$. No GPU is used during any experiments.

The anomaly detection experiment (Figure~\ref{fig:better}) is conducted on a server with an Intel Xeon w9-3495X CPU with 112 cores and $512gb$ of RAM. The mandelbrot fit is conducted on a cluster with varying CPU's and $40gb$ of RAM per process. These experiments also do not use any GPU and more details about their runtime can be found in Appendix~\ref{app:anomaly} and Appendix~\ref{app:mandelbrot}.


\end{document}